\documentclass{colt2017} 


\usepackage{hyperref}

\usepackage{dsfont}
\usepackage{xparse}

\newcommand{\lr}[1]{\left (#1\right)}
\newcommand{\lrc}[1]{\left \{#1\right\}}
\newcommand{\lrs}[1]{\left [#1 \right]}

\renewcommand{\P}[1]{\mathbb P\lrc{#1}}
\NewDocumentCommand{\E}{o}{\mathbb E\IfValueT{#1}{\lrs{#1}}}
\NewDocumentCommand{\1}{o}{\mathds 1\IfValueT{#1}{\!\!\lrc{#1}}}

\usepackage{xspace}

\usepackage{color}

\usepackage{algorithm}
\usepackage{algorithmic}

\newcommand{\F}{\mathcal{F}}
\newcommand{\event}{\mathcal{E}}

\newcommand{\tmin}{t_{\texttt{min}}}

\DeclareMathOperator{\UCB}{UCB}
\DeclareMathOperator{\LCB}{LCB}
\newcommand{\DLCB}{\hat \Delta^{\texttt{LCB}}}

\newcommand{\EXPPP}{EXP3++\xspace}
\newcommand{\EXP}{EXP3\xspace}

\renewcommand{\cite}{\citep}

\title[An Improved Parametrization and Analysis of the EXP3++ Algorithm]{An Improved Parametrization and Analysis of the EXP3++ Algorithm for Stochastic and Adversarial Bandits}

\usepackage{times}



 \coltauthor{\Name{Yevgeny Seldin} \Email{seldin@di.ku.dk}\\
 \addr University of Copenhagen
 \AND
 \Name{G{\'a}bor Lugosi} \Email{gabor.lugosi@gmail.com}\\
 \addr ICREA and Pompeu Fabra University
 }

\begin{document}

\maketitle

\begin{abstract}
We present a new strategy for gap estimation in randomized algorithms for multiarmed bandits and combine it with the \EXPPP algorithm of \citet{SS14}. In the stochastic regime the strategy reduces dependence of regret on a time horizon from $(\ln t)^3$ to $(\ln t)^2$ and eliminates an additive factor of order $\Delta e^{1/\Delta^2}$, where $\Delta$ is the minimal gap of a problem instance. In the adversarial regime regret guarantee remains unchanged.
\end{abstract}

\section{Introduction}

Stochastic (i.i.d.) and adversarial multiarmed bandits are two of the most basic problems in online learning \cite{Tho33,Rob52,LR85,ACBF02,ACB+95,ACB+02}. In recent years there has been an increased interest in algorithms that can be applied in both settings \cite{BS12,SS14,AC16}. This line of work can be seen as part of a growing area of ``general-purpose'' algorithms that are applicable to multiple online learning settings simultaneously \cite{Sel15}. The advantage of such algorithms is in their ability to exploit problem simplicity (such as i.i.d.\ environment) without compromising on the worst case guarantees.

There exist two basic approaches to deriving algorithms applicable to both stochastic and adversarial multiarmed bandits. The first starts with an algorithm for stochastic bandits and equips it with a mechanism for detecting deviations from the i.i.d.\ assumption. If such a deviation is detected, the algorithm switches into an adversarial operation mode \citep{BS12,AC16}. The switch is irreversible and, therefore, this approach relies on a knowledge of time horizon. It allows to achieve $O\lr{\sum_{a:\Delta(a)>0} \frac{\ln T}{\Delta(a)}}$ regret guarantee in the stochastic regime and $O\lr{\sqrt{KT\ln T}}$ regret guarantee in the adversarial regime, where $a$ indexes the arms, $\Delta(a)$ is the suboptimality gap of arm $a$, $T$ is the number of game rounds, and $K$ is the number of arms \citep{AC16}. We note that in absence of the knowledge of time horizon the approach has to be combined with the doubling trick, which leads to deterioration of regret guarantee in the stochastic regime to $O\lr{\sum_{a:\Delta(a)>0} \frac{\lr{\ln t}^2}{\Delta(a)}}$ (we use capital $T$ in results that assume a known time horizon and small $t$ otherwise).

The second approach is to start with an algorithm for adversarial bandits and modify its exploration strategy to allow for gap detection. This approach has a number of advantages and disadvantages. On the positive side it has a single operation mode that naturally takes care of both regimes; it does not rely on the knowledge of  time horizon; it has a better regret guarantee of $O\lr{\sqrt{Kt\ln K}}$ in the adversarial regime; and it can handle additional intermediate regimes, such as moderately contaminated stochastic regime and adversarial regime with a gap \citep{SS14}. On the negative side its current regret guarantee in the stochastic regime is weaker, $O\lr{\sum_{a:\Delta(a) > 0} \frac{\lr{\ln t}^3}{\Delta(a)}}$ with an exponentially large additive constant, and it does not provide high-probability regret guarantee in the adversarial regime, but only a guarantee on the expected regret. In our contribution we modify the second approach and improve its regret guarantee in the stochastic regime by a multiplicative factor of $\ln t$, as well as eliminate the exponentially large additive constant.

The work of \citet{SS14} is based on an observation that the \EXP algorithm with losses for adversarial multiarmed bandits \citep{ACB+02,BCB12} has a degree of freedom in the choice of exploration strategy. \citeauthor{SS14} have proposed a generalized \EXPPP algorithm based on a combination of two independent mechanisms. The first mechanism controls the performance of the algorithm in adversarial environments through a standard \EXP-like playing strategy in the form of a Gibbs distribution over actions. The second mechanism exploits the residual degree of exploration freedom for detection and exploitation of suboptimality gaps. The two mechanisms operate in parallel with almost no interference and achieve improved regret guarantee in the stochastic regime without impairing the adversarial regret guarantee. 

We propose a new generic strategy for gap estimation in the stochastic regime that can be combined with almost any randomized playing strategy, including the \EXPPP algorithm. The new strategy is based on unweighted losses, as opposed to importance-weighted losses used in the main result of \citet[Theorem 3]{SS14}. It improves over the attempt of \citeauthor{SS14} to use unweighted losses for gap estimation \citep[Theorem 4]{SS14}, both in terms of regret bound and in terms of underlying assumptions (the regret bound is improved by a multiplicative factor of order $\frac{\ln t}{\Delta}$ and the assumption on known time horizon is eliminated). 

The proposed approach is modular: we provide an algorithm for gap estimation in the i.i.d.\ regime and then combine it with the \EXPPP algorithm, which provides protection against an adversary. 
The key features of the contribution are summarized below:
%
\begin{itemize}
	\item[$+$] We propose a novel generic strategy for gap estimation by randomized algorithms in i.i.d.\ regimes. Our 
	strategy can be combined with any randomized algorithm that has the necessary freedom in the choice of exploration distribution. 
	\item[$+$] In combination with the \EXPPP the regret of the combined algorithm in the i.i.d.\ regime is of order $O\lr{\sum_{a:\Delta(a)>0} \frac{\lr{\ln t}^2}{\Delta(a)}}$, which is an improvement by a multiplicative factor of $\ln t$ compared to \citet{SS14}. In the adversarial regime the regret guarantee is unchanged, $O\lr{\sqrt{Kt\ln K}}$.
	\item[$-$] The new approach does \emph{not} provide an improved regret guarantee in the moderately contaminated stochastic regime and adversarial regime with a gap defined in \citet{SS14}. For both regimes only the worst-case adversarial regret guarantee holds.
	\item Without the assumption on known time horizon the regret guarantee in the stochastic regime is of the same order as \citet{AC16} and the regret guarantee in the adversarial regime is stronger by a factor of $\sqrt{\ln t}$. However, our approach does not provide high-probability guarantee in the adversarial regime, as do \citet{AC16}. But on the positive side our approach is modular, it does not depend on the time horizon, and has a single operation mode for both stochastic and adversarial regimes, which makes it a bit more elegant.
\end{itemize}

In the following we start with outlining the problem setting in Section~\ref{sec:setting} and cite the \EXPPP algorithm and known results about it in Section~\ref{sec:known}. We present our gap estimation strategy in Section~\ref{sec:gap} and its combination with \EXPPP in Section~\ref{sec:EXP3}. The corresponding proofs are given in Sections~\ref{sec:thmDLCB} and~\ref{sec:thmEXP3++} and we finish with a discussion in Section~\ref{sec:discussion}.

\section{Problem Setting}
\label{sec:setting}

The problem setting follows \citet{SS14}. We study the multiarmed bandit game. At round $t$ of the game the algorithm chooses an \emph{action} $A_t$ among $K$ possible actions (a.k.a.\ \emph{arms}) and observes the corresponding loss $\ell_t^{A_t}$. The losses of other arms are not observed. There is a large number of loss generation models two of which, stochastic and adversarial, are considered below. In this work we restrict ourselves to loss sequences $\lrc{\ell_t^a}_{t,a}$ that are generated independently of the algorithm's actions (the so called oblivious learning model). Under this assumption we can assume that the loss sequences are determined before the game starts (but not revealed to the algorithm). We also make the standard assumption that the losses are bounded in the $[0,1]$ interval.

The performance of an algorithm is quantified by the \emph{expected regret}, defined as the difference between the expected loss of the algorithm up to round $t$ and the expected loss of the best arm up to round $t$:
\begin{equation}
\label{eq:regret}
R(t) = \sum_{s=1}^t \E[\ell_s^{A_s}] - \min_a \lrc{\E[\sum_{s=1}^t \ell_s^a]}.
\end{equation}
The expectation is taken over the possible randomness of the algorithm and the loss generation model. In the i.i.d.\ setting the $\ell_s^a$-s are random variables and the definition coincides with the definition of pseudo regret \citep{BCB12}. In the adversarial setting the $\ell_s^a$-s are considered deterministic and the second expectation can be omitted. In some literature $R(t)$ is termed \emph{excess of cumulative predictive risk} \citep{Win17}. Since $R(t)$ is the only notion of regret considered in the paper we will often call it simply \emph{regret} (omitting the word ``expected''). The goal of the algorithm is to minimize $R(t)$.

We consider two standard loss generation models, the \emph{adversarial regime} and the \emph{stochastic regime}.

\paragraph{Adversarial regime} In this regime the loss sequences are generated by an unrestricted adversary (who is oblivious to the algorithm's actions). 
An arm $a\in\arg \min_{a'} \lr{\sum_{s=1}^t \ell_s^{a'}}$ is known as a \emph{best arm in hindsight} for the first $t$ rounds.

\paragraph{Stochastic regime} In this regime the losses $\ell_t^a$ are sampled independently from an unknown distribution that depends on $a$, but not on $t$. We use $\mu(a) = \E[\ell_t^a]$ to denote the expected loss of an arm $a$. An arm $a$ is called a \emph{best arm} if $\mu(a) = \min_{a'}\lrc{\mu(a')}$ and \emph{suboptimal} otherwise; let $a^*$ denote some best arm. For each arm $a$, define the \emph{gap} $\Delta(a) = \mu(a) - \mu(a^*)$. 

Letting $N_t(a)$ be the number of times arm $a$ was played up to (and including) round $t$, in the stochastic regime the regret can be rewritten as
\[
R(t) = \sum_a \E[N_t(a)] \Delta(a).
\]



\section{Known Results}
\label{sec:known}

In our work we are using the \EXPPP algorithm of \citet{SS14}, which is provided in Algorithm \ref{algo:EXP3++}. 

\begin{algorithm}
\begin{algorithmic}[1]
\STATE \textit{Remark: See text for definition of $\eta_t$ and $\xi_t(a)$; $\1[\cdot]$ is used to denote the indicator function}
\STATE $\forall a$: $\tilde L_0(a) = 0$\\
\FOR{$t = 1,2,...$}
\STATE $\forall a$: $\varepsilon_t(a) = \min \lrc{\frac{1}{2K}, \frac{1}{2}\sqrt{\frac{\ln K}{t K}}, \xi_t(a)}$\\
\STATE $\forall a$: $\rho_t(a) = e^{-\eta_t \tilde L_{t-1}(a)} \big / \sum_{a'} e^{-\eta_t \tilde L_{t-1}(a')}$\label{line:EXP3rho}\\
\STATE $\forall a$: $\tilde \rho_t(a) = \lr{1 - \sum_{a'} \varepsilon_t(a')} \rho_t(a) + \varepsilon_t(a)$\label{line:EXP3rho+1}\\
\STATE Draw action $A_t$ according to $\tilde \rho_t$ and play it\\
\STATE Observe and suffer the loss $\ell_t^{A_t}$\label{line:EXP3rho+3}\\
\STATE $\forall a: ~ \tilde \ell_t^a = \frac{\ell_t^{A_t}}{\tilde \rho_t(a)}\1[A_t = a]$\\
\STATE $\forall a: ~ \tilde L_t(a) = \tilde L_{t-1}(a) + \tilde \ell_t^a$
\ENDFOR
\end{algorithmic}
\caption{\EXPPP.}
\label{algo:EXP3++}
\end{algorithm}


We note that since we are not changing the \EXPPP algorithm, but only modify the definition of the exploration parameters $\xi_t(a)$, the following result of \citet{SS14} is valid.
\begin{theorem}[\citealp{SS14}]
\label{thm:EXP3++Adv}
For $\eta_t = \frac{1}{2}\sqrt{\frac{\ln K}{t K}}$ and any $\xi_t(a)\geq 0$ the regret of the \EXPPP in the adversarial regime for any $t$ satisfies:
\[
R(t) \leq 4 \sqrt{K t \ln K}.
\]
\end{theorem}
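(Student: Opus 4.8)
This is the standard regret bound for exponential weights (\EXP) run with the anytime learning rate $\eta_t=\frac12\sqrt{\ln K/(tK)}$; the choice of $\xi_t(a)$ enters only through the extra exploration mass $\varepsilon_t(a)$, which turns out to cost lower order, so the guarantee does not depend on $\xi_t$. I would argue as follows. \emph{Step 1: reduce to the Gibbs part.} Since the loss sequence is oblivious it is effectively fixed, so put $a^*\in\arg\min_a\sum_{s=1}^t\ell_s^a$ and $\gamma_s=\sum_{a'}\varepsilon_s(a')$. Line~\ref{line:EXP3rho+1} and $\ell_s^a\le1$ give $\sum_a\tilde\rho_s(a)\ell_s^a=(1-\gamma_s)\sum_a\rho_s(a)\ell_s^a+\sum_a\varepsilon_s(a)\ell_s^a\le\sum_a\rho_s(a)\ell_s^a+\gamma_s$. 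The constraint $\varepsilon_s(a)\le\frac1{2K}$ forces $\gamma_s\le\frac12$ (so $\tilde\rho_s$ is a valid distribution with $\tilde\rho_s(a)>0$ for every $a$), and $\varepsilon_s(a)\le\frac12\sqrt{\ln K/(sK)}$ forces $\gamma_s\le\frac12\sqrt{K\ln K/s}$, whence $\sum_{s=1}^t\gamma_s\le\frac12\sqrt{K\ln K}\sum_{s=1}^t s^{-1/2}\le\sqrt{Kt\ln K}$. It thus remains to bound $\E\lrs{\sum_{s=1}^t\sum_a\rho_s(a)\ell_s^a}-\sum_{s=1}^t\ell_s^{a^*}$ by $3\sqrt{Kt\ln K}$.

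\emph{Step 2: importance weighting and a potential argument.} As $\tilde\rho_s(a)\ge(1-\gamma_s)\rho_s(a)>0$, the estimates $\tilde\ell_s^a$ are well defined, nonnegative, and conditionally unbiased, so by the tower rule the displayed quantity equals $\E\lrs{\sum_s\sum_a\rho_s(a)\tilde\ell_s^a-\sum_s\tilde\ell_s^{a^*}}$. Introduce the potential $\Psi_s(\eta)=-\frac1\eta\ln\lr{\frac1K\sum_a e^{-\eta\tilde L_s(a)}}$, which satisfies $\Psi_0(\eta)=0$ and $\Psi_s(\eta)\le\tilde L_s(a^*)+\frac{\ln K}{\eta}$ for every $\eta>0$. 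Since $\rho_s$ is exactly the Gibbs distribution at rate $\eta_s$ over $\tilde L_{s-1}$, a one-round expansion using $e^{-x}\le1-x+\frac{x^2}2$ (valid at $x=\eta_s\tilde\ell_s^a\ge0$) and then $1+u\le e^u$ gives $\Psi_s(\eta_s)-\Psi_{s-1}(\eta_s)\ge\sum_a\rho_s(a)\tilde\ell_s^a-\frac{\eta_s}2\sum_a\rho_s(a)(\tilde\ell_s^a)^2$. Summing over $s$ and regrouping, $\sum_{s=1}^t\lrs{\Psi_s(\eta_s)-\Psi_{s-1}(\eta_s)}=\Psi_t(\eta_t)-\Psi_0(\eta_1)+\sum_{s=1}^{t-1}\lrs{\Psi_s(\eta_s)-\Psi_s(\eta_{s+1})}$, and since $\eta\mapsto\Psi_s(\eta)$ is non-increasing (a H\"older/Jensen fact about log-partition functions) while $\eta_{s+1}\le\eta_s$, each term of the last sum is $\le0$ and may be discarded, leaving $\sum_s\sum_a\rho_s(a)\tilde\ell_s^a-\tilde L_t(a^*)\le\frac{\ln K}{\eta_t}+\frac12\sum_s\eta_s\sum_a\rho_s(a)(\tilde\ell_s^a)^2$.

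\emph{Step 3: evaluate the two terms.} With $\eta_t=\frac12\sqrt{\ln K/(tK)}$ we have $\frac{\ln K}{\eta_t}=2\sqrt{Kt\ln K}$. Conditioning on the first $s-1$ rounds, the conditional second moment of $\tilde\ell_s^a$ is $(\ell_s^a)^2/\tilde\rho_s(a)\le1/\tilde\rho_s(a)$, so the conditional expectation of $\sum_a\rho_s(a)(\tilde\ell_s^a)^2$ is at most $\sum_a\rho_s(a)/\tilde\rho_s(a)\le K/(1-\gamma_s)\le2K$; hence $\frac12\sum_s\eta_s\,\E\lrs{\sum_a\rho_s(a)(\tilde\ell_s^a)^2}\le K\sum_s\eta_s=\frac12\sqrt{K\ln K}\sum_s s^{-1/2}\le\sqrt{Kt\ln K}$. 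Taking expectations in Step~2 and adding the Step~1 contribution, $R(t)\le\sqrt{Kt\ln K}+2\sqrt{Kt\ln K}+\sqrt{Kt\ln K}=4\sqrt{Kt\ln K}$, as claimed.

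\emph{Main obstacle.} Everything here is routine except the handling of the time-varying learning rate in the potential telescoping: with a constant $\eta$ the potentials cancel exactly, but here the increment at round $s$ is evaluated at rate $\eta_s$ whereas the potential one would want it to cancel against is $\Psi_{s-1}(\eta_s)$ rather than $\Psi_{s-1}(\eta_{s-1})$. The clean fix is to observe that $\Psi_s(\cdot)$ is non-increasing and $\eta_s$ is decreasing, so the leftover learning-rate-change terms are non-positive and can simply be dropped; getting this sign right --- rather than bounding those terms wastefully --- is precisely what keeps the leading constant at $4$.
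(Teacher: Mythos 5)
Your proof is correct, and it is essentially the argument behind the cited result: the paper does not reprove Theorem~\ref{thm:EXP3++Adv} but imports it from \citet{SS14}, whose proof is exactly this standard \EXP-with-losses potential analysis (with the monotone-in-$\eta$ potential handling the decreasing learning rate) plus the observation that the exploration mixture costs at most $\sum_s\sum_a\varepsilon_s(a)\leq\sqrt{Kt\ln K}$ because $\varepsilon_s(a)\leq\frac{1}{2}\sqrt{\ln K/(sK)}$ regardless of $\xi_s(a)$. Your accounting of the three terms ($2\sqrt{Kt\ln K}$ from $\ln K/\eta_t$, $\sqrt{Kt\ln K}$ from the second-moment term, $\sqrt{Kt\ln K}$ from exploration) matches the constant $4$ as in the original.
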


Note that the regret bound in Theorem~\ref{thm:EXP3++Adv} is just a factor of 2 worse than the regret of \EXP with losses \citep{BCB12}.

\section{Gap Estimation in Randomized Playing Strategies}
\label{sec:gap}

Our first contribution is a generic algorithm for gap estimation in stochastic environments. The algorithm can be combined with any randomized playing strategy, including the \EXPPP. It is detailed in Algorithm~\ref{algo:IWUCB}. Line~\ref{line:anyrho} is the ``plug-in'' point, where the algorithm can be combined with any randomized playing strategy. In combination with the \EXPPP we replace Line~\ref{line:anyrho} in Algorithm~\ref{algo:IWUCB} with Line~\ref{line:EXP3rho} from Algorithm~\ref{algo:EXP3++}. (Note that lines~\ref{line:anyrho+1}-\ref{line:anyrho+3} in Algorithm~\ref{algo:IWUCB} are identical to lines~\ref{line:EXP3rho+1}-\ref{line:EXP3rho+3} in Algorithm~\ref{algo:EXP3++} and thus the two mechanisms can operate in parallel without interfering with each other.)

We use $\hat L_t(a)$ to denote unweighted cumulative loss of arm $a$ up to (and including) round $t$. (It should not be confused with $\tilde L_t(a)$, which denotes cumulative importance-weighted loss and defined in Lines 9-10 of Algorithm~\ref{algo:EXP3++}.)


\begin{algorithm}
\begin{algorithmic}[1]
\STATE \textit{Remark: see text for definition of $\xi_t(a)$}
\STATE Play each arm once and update $\hat L_K(a)$ and $N_K(a)$
\FOR{$t = K+1,K+2,...$}
\STATE $\forall a$: $\UCB_t(a) = \min\bigg\{1, \frac{\hat L_{t-1}(a)}{N_{t-1}(a)} + \sqrt{\frac{\alpha \ln \lr{t K^{1/\alpha}}}{2N_{t-1}(a)}}\bigg\}$
\STATE $\forall a$: $\LCB_t(a) = \max\bigg\{0, \frac{\hat L_{t-1}(a)}{N_{t-1}(a)} - \sqrt{\frac{\alpha \ln \lr{t K^{1/\alpha}}}{2N_{t-1}(a)}}\bigg\}$
\STATE $\forall a$: $\DLCB_t(a) = \max\lrc{0,\LCB_t(a) - {\displaystyle\min_{a'}}\UCB_t(a')}$\label{ln:DLCB}\hfill//\textit{ Note that $0 \leq \DLCB_t(a) \leq 1$}
\STATE $\forall a$: $\varepsilon_t(a) = \min \lrc{\frac{1}{2K}, \frac{1}{2}\sqrt{\frac{\ln K}{tK}}, \xi_t(a)}$
\STATE Let $\rho_t(a)$ be any distribution over $\lrc{1,\dots,K}$\label{line:anyrho}\hfill//\textit{~The plug-in point for other algorithms}
\STATE $\forall a$: $\tilde \rho_t(a) = \lr{1 - \sum_{a'} \varepsilon_t(a')} \rho_t(a) + \varepsilon_t(a)$\label{line:anyrho+1}
\STATE Draw action $A_t$ according to $\tilde \rho_t$ and play it
\STATE Observe and suffer the loss $\ell_t^{A_t}$\label{line:anyrho+3}
\STATE $\forall a: \hat L_t(a) = \hat L_{t-1}(a) + \ell_t^a \1[A_t=a]$
\STATE $\forall a: N_t(a) = N_{t-1}(a) + \1[A_t=a]$
\ENDFOR
\end{algorithmic}
\caption{Gap Estimation in Randomized Playing Strategies.}
\label{algo:IWUCB}
\end{algorithm}

We provide the following guarantee for empirical gap estimates $\DLCB_t(a)$ in Line~\ref{ln:DLCB} of Algorithm~\ref{algo:IWUCB}.

\begin{proposition}
\label{thm:DLCB}
For any $a$ and $t$, the gap estimates $\DLCB_t(a)$ of Algorithm~\ref{algo:IWUCB} in the i.i.d.\ regime satisfy:
\[
\P{\DLCB_t(a) \geq \Delta(a)} \leq \frac{1}{t^{\alpha - 1}}.
\]
Furthermore, for any choice of $\xi_t(a)$, such that $\xi_t(a) \geq \frac{\beta\ln t}{t \DLCB_t(a)^2}$, for $\alpha \geq 3$, for $\beta \geq 64(\alpha+1) \geq 256$, and $t \geq \tmin(a) := \min\lrc{t: t\geq \frac{4K\beta \lr{\ln t}^2}{\Delta(a)^4 \ln K}}$ (this is the first time when $\frac{\beta \ln t}{t \Delta(a)^2} \leq \frac{1}{2} \sqrt{\frac{\ln K}{tK}}$) the gap estimates satisfy
\[
\P{\DLCB_t(a) \leq \frac{1}{2} \Delta(a)} \leq \lr{\frac{\ln t}{t \Delta(a)^2}}^{\alpha-2} + \frac{2}{Kt^{\alpha-1}} + 2\lr{\frac{1}{t}}^{\frac{\beta}{8}}.
\]
\end{proposition}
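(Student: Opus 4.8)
The plan is to control the empirical mean $\hat L_{t-1}(a)/N_{t-1}(a)$ around the true mean $\mu(a)$ via Hoeffding-type concentration, and then translate those deviations into statements about $\UCB_t$, $\LCB_t$, and hence $\DLCB_t$. Throughout, the main technical nuisance is that $N_{t-1}(a)$ is itself random and can be small; I would handle this by a union bound over all possible values $n = 1, \dots, t-1$ of the count, paying the price of an extra factor of $t$ inside the logarithm — which is exactly why the confidence radius uses $\ln(tK^{1/\alpha})$ rather than $\ln K$. For the first inequality, I would argue that if $\DLCB_t(a) \geq \Delta(a)$ then in particular $\LCB_t(a) \geq \min_{a'} \UCB_t(a') \geq \mu(a^*) \geq \mu(a) - \Delta(a) = \mu(a^*)$; more directly, $\LCB_t(a) \geq \Delta(a) + \mu(a^*) > \mu(a)$ would already require the empirical mean of $a$ to fall below its true mean by more than one confidence radius, say. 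Carefully: $\DLCB_t(a) \geq \Delta(a)$ forces $\LCB_t(a) \geq \Delta(a) + \min_{a'}\UCB_t(a') \geq \Delta(a) + \UCB_t(a^*)$, and since $\UCB_t(a^*) \geq \mu(a^*)$ would hold on a good event, we get $\LCB_t(a) > \mu(a)$, i.e. the empirical loss of $a$ underestimates $\mu(a)$ by at least a full radius. A peeling/union bound over $n = N_{t-1}(a)$ then gives probability at most $t \cdot e^{-2n \cdot (\text{radius})^2 / 1}$-style terms summing to $t \cdot (tK^{1/\alpha})^{-\alpha} \leq t^{1-\alpha}$ after absorbing the $K^{1/\alpha}$ factor; getting the constant $\alpha$ (not $\alpha - 1 + \text{something}$) in the exponent is the bookkeeping to be careful about.

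For the second, harder inequality I would split the bad event $\{\DLCB_t(a) \leq \tfrac12\Delta(a)\}$ according to its two possible causes: either $\LCB_t(a)$ is too small (the empirical mean of $a$, or its confidence radius, misbehaves), or $\min_{a'}\UCB_t(a')$ is too large (some suboptimal arm, or $a^*$, has an inflated UCB, which in turn happens when its count $N_{t-1}(a')$ is small). The term $(\ln t / (t\Delta(a)^2))^{\alpha - 2}$ should come from the event that $a^*$ has been pulled too few times: since each round $a^*$ is pulled with probability at least $\varepsilon_t(a^*) \geq \min\{1/(2K), \tfrac12\sqrt{\ln K/(tK)}\}$, and for $t \geq \tmin(a)$ the second term dominates, a multiplicative Chernoff bound on $N_{t-1}(a^*)$ gives that $N_{t-1}(a^*)$ is at least of order $\sqrt{t\ln K / K}$ except with probability exponentially small in that quantity — and one checks that $\sqrt{t \ln K/K}$ times the relevant $\Delta(a)^2$ scaling reproduces the stated $(\ln t/(t\Delta(a)^2))^{\alpha-2}$ bound (this is where the definition of $\tmin(a)$ is used). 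The $2/(Kt^{\alpha-1})$ term is the Hoeffding cost of the empirical means of $a$ and $a^*$ deviating by more than a radius given adequate counts (a union bound like in part one), and the $2(1/t)^{\beta/8}$ term is the cost of $\varepsilon_t(a)$-exploration of arm $a$ itself failing: conditionally on $\DLCB_t(a)$ being large on previous rounds, $\xi_t(a) \geq \beta\ln t/(t\Delta(a)^2)$-type lower bounds force enough pulls of $a$, and the $\beta/8$ exponent traces back to a Chernoff bound with $\beta \geq 64(\alpha+1)$.

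The key steps, in order: (i) define the good event on which all empirical means lie within their confidence radii given their realized counts, and bound its complement by $t^{1-\alpha}$-scale terms via peeling over counts; (ii) on that event, deduce the first inequality by the chain $\DLCB_t(a) \geq \Delta(a) \Rightarrow \LCB_t(a) > \mu(a)$; (iii) for the second part, lower-bound $N_{t-1}(a^*)$ and $N_{t-1}(a)$ using the forced exploration probabilities $\varepsilon_t(\cdot)$ and $\xi_t(a)$ together with a multiplicative Chernoff / Bernstein bound, incurring the $(\ln t/(t\Delta(a)^2))^{\alpha-2}$ and $2(1/t)^{\beta/8}$ terms respectively; (iv) on the intersection of "counts are adequate" and "empirical means concentrate", verify algebraically that $\DLCB_t(a) > \tfrac12\Delta(a)$, using $\alpha \geq 3$ and $t \geq \tmin(a)$ to absorb lower-order terms; (v) collect the failure probabilities by a union bound. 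I expect step (iii) — precisely converting the exploration-rate lower bounds into high-probability lower bounds on the counts with exponents matching $\alpha - 2$ and $\beta/8$, while being careful that $\xi_t(a)$ depends on the past through $\DLCB_t(a)$ and so the pulls of $a$ are not i.i.d. Bernoulli — to be the main obstacle; it likely requires conditioning on the first time the gap estimate for $a$ drops, or a stopping-time argument, rather than a naive Chernoff bound.
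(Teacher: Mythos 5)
Your overall skeleton for the first inequality matches the paper (confidence radius $\sqrt{\alpha\ln(tK^{1/\alpha})/(2N)}$, a union/peeling over the random count giving $\frac{1}{Kt^{\alpha-1}}$ per arm, then the implication $\DLCB_t(a)\geq\Delta(a)\Rightarrow\LCB_t(a)\geq\mu(a)$; note only that $\min_{a'}\UCB_t(a')\leq\UCB_t(a^*)$, so your chain is reversed there — one needs the UCBs of \emph{all} arms to lie above their own means, which is exactly why the final bound is $1/t^{\alpha-1}$ rather than $2/(Kt^{\alpha-1})$). The second part, however, has two genuine gaps. First, your claimed exploration floor $\varepsilon_s(a^*)\geq\min\{1/(2K),\frac{1}{2}\sqrt{\ln K/(sK)}\}$ is false: $\varepsilon_s(a^*)$ is the minimum of those two quantities \emph{and} $\xi_s(a^*)$, and the hypothesis only guarantees $\xi_s(a^*)\geq\beta\ln s/(s\,\DLCB_s(a^*)^2)$, which can be as small as $\beta\ln s/s$ whenever $\DLCB_s(a^*)$ is close to $1$. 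Hence the multiplicative Chernoff bound giving $N_{t-1}(a^*)\gtrsim\sqrt{t\ln K/K}$ does not go through, and even if it did, a failure probability of order $e^{-c\sqrt{t\ln K/K}}$ does not ``reproduce'' the polynomial term $\lr{\ln t/(t\Delta(a)^2)}^{\alpha-2}$. In the paper that term has a different origin: it is the probability that the exploration floor $\beta\ln t/(t\Delta(a)^2)$ fails at some round, which is controlled by applying the part-one upper bound $\DLCB_s(a)\leq\Delta(a)$ (and $\DLCB_s(a^*)\leq\Delta(a)$, using $\Delta(a^*)=0\leq\Delta(a)$) to every $s\geq t\Delta(a)^2/\ln t$ and summing $s^{-(\alpha-1)}$; the early rounds $s\leq t\Delta(a)^2/\ln t$ are handled deterministically via $\DLCB_s\leq 1$, and $t\geq\tmin(a)$ is what ensures the other two terms in the minimum sit above the floor. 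The two $(1/t)^{\beta/8}$ terms are then the count failures of \emph{both} $a$ and $a^*$ (not of $a$ alone, as you assign them), both with the same floor $\beta\ln t/(t\Delta(a)^2)$, while $2/(Kt^{\alpha-1})$ is, as you say, the Hoeffding cost for the two empirical means.

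Second, the step you yourself flag as the main obstacle — turning a lower bound on the conditional exploration probabilities into a high-probability lower bound on the counts when those probabilities are adapted to the past — is exactly the missing key lemma, and ``conditioning on the first time the gap estimate drops'' is not developed into an argument. The paper supplies a dedicated concentration inequality (its Theorem 3): for Bernoulli variables adapted to a filtration, the probability that $\sum_i X_i\leq n\gamma/2$ \emph{and} all conditional means stay at least $\gamma$ is at most $e^{-n\gamma/8}$; because the event on the conditional means is built into the probability, no stopping-time surgery is needed, and its complement is precisely the floor-failure event already paid for above. Applied with $\gamma=\beta\ln t/(t\Delta(a)^2)$ this yields $N_t(a),N_t(a^*)\geq\beta\ln t/(2\Delta(a)^2)$ up to the stated probabilities, after which your step (iv) — the algebra giving $\DLCB_t(a)\geq\Delta(a)\lr{1-4\sqrt{(\alpha+1)/\beta}}\geq\Delta(a)/2$ for $\beta\geq 64(\alpha+1)$ — is exactly the paper's Step 4. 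Without a lemma of this type, and with the misattributed floor for $a^*$, the proposal as written does not yield the claimed bound.
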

A proof of this proposition is provided in Section~\ref{sec:thmDLCB}. The main message of Proposition~\ref{thm:DLCB} is that for an appropriate choice of $\xi_t(a)$ the gap estimates $\DLCB_t(a)$ satisfy $\frac{1}{2} \Delta(a) \leq \DLCB_t(a) \leq \Delta(a)$ with high probability. Thus, $\DLCB_t(a)$ can be used as a reliable estimate of $\Delta(a)$ for any higher level purpose. 


\section{Reparametrization and Improved Regret Guarantee for \EXPPP}
\label{sec:EXP3}

We combine Algorithm~\ref{algo:IWUCB} with the \EXPPP algorithm to achieve an improved regret guarantee in the stochastic regime.

\begin{theorem}
\label{thm:EXP3++}
Let $\xi_t(a) = \frac{\beta \ln t}{t \DLCB_t(a)^2}$, where $\DLCB_t(a)$ is the empirical gap estimate from Algorithm~\ref{algo:IWUCB}. Then for $\alpha = 3$ and $\beta = 256$ the expected regret of \EXPPP in the stochastic regime satisfies
\[
R_t = O\lr{\sum_{a:\Delta(a)>0} \frac{\lr{\ln t}^2}{\Delta(a)}} + \tilde O\lr{\sum_{a:\Delta(a)>0} \frac{K}{\Delta(a)^3}},
\]
where the $\tilde O$ notation hides logarithmic factors.
\end{theorem}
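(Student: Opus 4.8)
The plan is to bound $\E[N_t(a)]$ for each suboptimal arm $a$ and sum, using $R_t=\sum_{a:\Delta(a)>0}\Delta(a)\E[N_t(a)]$. Write $\E[N_t(a)]=1+\sum_{s=K+1}^{t}\E[\tilde\rho_s(a)]$ and apply the crude split $\tilde\rho_s(a)\le\varepsilon_s(a)+\rho_s(a)$. I would cut the horizon at $\tmin(a)$: for $s\le\tmin(a)$ simply bound $\E[\tilde\rho_s(a)]\le1$, which costs at most $\tmin(a)$ plays, and since $\tmin(a)=\tilde O\lr{K/\Delta(a)^{4}}$ by its definition, this is exactly the origin of the $\tilde O\lr{\sum_a K/\Delta(a)^{3}}$ term after multiplication by $\Delta(a)$ (and if $t<\tmin(a)$ the bound $\Delta(a)t$ is already of this order). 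It then remains to show that, for $s>\tmin(a)$, each of the exploration sum $\sum_s\E[\varepsilon_s(a)]$ and the exploitation sum $\sum_s\E[\rho_s(a)]$ contributes at most $O\lr{(\ln t)^2/\Delta(a)^2}$ plays, up to lower-order terms.

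For the exploration sum I would condition on the gap estimate. On $\lrc{\DLCB_s(a)\ge\frac{1}{2}\Delta(a)}$ we have $\varepsilon_s(a)\le\xi_s(a)=\frac{\beta\ln s}{s\,\DLCB_s(a)^2}\le\frac{4\beta\ln s}{s\,\Delta(a)^2}$, and $\sum_{s\le t}\frac{4\beta\ln s}{s\Delta(a)^2}=O\lr{\frac{\beta(\ln t)^2}{\Delta(a)^2}}$; on the complementary event $\varepsilon_s(a)\le\frac{1}{2}\sqrt{\ln K/(sK)}$, and Proposition~\ref{thm:DLCB} (with $\alpha=3$, $\beta=256$) bounds the probability of that event for $s\ge\tmin(a)$ by $\frac{\ln s}{s\Delta(a)^2}+\frac{2}{Ks^2}+2s^{-32}$, so that after weighting by $\frac{1}{2}\sqrt{\ln K/(sK)}$ and summing we obtain only a lower-order contribution. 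Multiplying by $\Delta(a)$ and summing over arms gives the $O\lr{\sum_a(\ln t)^2/\Delta(a)}$ term.

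The exploitation sum is the crux, and here I would reuse the concentration argument underlying the stochastic analysis of \citet{SS14}. Using $\rho_s(a)\le e^{-\eta_s(\tilde L_{s-1}(a)-\tilde L_{s-1}(a^{*}))}$, the goal is to exhibit a high-probability event on which $\tilde L_{s-1}(a)-\tilde L_{s-1}(a^{*})\ge\frac{1}{2}(s-1)\Delta(a)$, since then $\rho_s(a)\le e^{-\frac{1}{2}\eta_s(s-1)\Delta(a)}=e^{-\Theta\lr{\Delta(a)\sqrt{s\ln K/K}}}$ and $\sum_s e^{-\Theta\lr{\Delta(a)\sqrt{s\ln K/K}}}=O\lr{K/(\Delta(a)^2\ln K)}$ by the substitution $u=\sqrt{s}$, while off that event $\rho_s(a)\le1$. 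This is the hard part, for three reasons. First, $\tilde\ell_r^a=\ell_r^a/\tilde\rho_r(a)$ is heavy-tailed, so a Bernstein/Freedman-type martingale inequality is needed and its predictable variance $\sum_{r<s}\mu_2(a)/\tilde\rho_r(a)$ must be controlled: on $\lrc{\DLCB_r(a)\le\Delta(a)}$ the exploration mass satisfies $\varepsilon_r(a)\ge\min\lrc{\frac{1}{2K},\,\frac{\beta\ln r}{r\Delta(a)^2}}$ for $r\ge\tmin(a)$, hence $1/\tilde\rho_r(a)\le 2K+\frac{r\Delta(a)^2}{\beta\ln r}$, and $\beta=256$ is chosen precisely so that the resulting deviation exponent beats $\alpha\ln s=3\ln s$. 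Second, the optimal-arm estimate $\tilde L_{s-1}(a^{*})$ must be kept below $(s-1)\mu(a^{*})+\frac{1}{4}(s-1)\Delta(a)$; here $\xi_r(a^{*})=+\infty$ forces $\tilde\rho_r(a^{*})\ge\frac{1}{2}\sqrt{\ln K/(rK)}$ \emph{deterministically}, so the standard \EXP moment-generating-function estimate applies, and the constraint $s>\tmin(a)$ is exactly what makes the corresponding failure probability summable. Third, all the failure probabilities — the Bernstein tails, the $\P{\DLCB_s(a)\ge\Delta(a)}\le s^{-2}$ terms of Proposition~\ref{thm:DLCB}, and the event that the predictable variance is not controlled — must be shown to sum to a quantity dominated by $\tmin(a)$, so they do not affect the leading order.

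Collecting the pieces gives, for each suboptimal $a$, $\E[N_t(a)]=O\lr{(\ln t)^2/\Delta(a)^2}+\tilde O\lr{K/\Delta(a)^4}$; multiplying by $\Delta(a)$ and summing over $a$ with $\Delta(a)>0$ yields $R_t=O\lr{\sum_a(\ln t)^2/\Delta(a)}+\tilde O\lr{\sum_a K/\Delta(a)^3}$. I expect the martingale concentration for the importance-weighted loss estimates, together with the bookkeeping that makes $\alpha=3$, $\beta=256$ suffice for all the error terms, to be by far the most demanding step; the exploration bound and the final summations are routine estimates of logarithmic and geometric-type series.
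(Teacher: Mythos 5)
Your plan follows essentially the same route as the paper: decompose $\E[N_t(a)]$ into the exploration part $\sum_s\E[\varepsilon_s(a)]$ and the exploitation part $\sum_s\E[\rho_s(a)]$, cut at $\tmin(a)$ (which is what produces the $\tilde O\lr{\sum_a K/\Delta(a)^3}$ term), bound the exploration part by Proposition~\ref{thm:DLCB} via the event $\DLCB_s(a)\geq\frac12\Delta(a)$, and bound the exploitation part via $\rho_s(a)\leq e^{-\eta_s\tilde\Delta_s(a)}$ together with a Bernstein/Freedman-type martingale inequality whose variance and magnitude are controlled through the exploration floor $\xi$ and the high-probability bound $\DLCB_s(a)\leq\Delta(a)$; the paper does exactly this, applying its Theorem~\ref{thm:Bernstein} (an event-restricted Bernstein inequality, which is the right tool precisely because $c_t$ and $\nu_t$ are only bounded on a high-probability event) to the single martingale difference $X_s=\Delta(a)-(\tilde\ell_s^a-\tilde\ell_s^{a^*})$ rather than to two separate tails as you propose, but that difference is only bookkeeping.

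There is, however, one concrete error in your sketch: the claim that $\xi_r(a^*)=+\infty$ and hence $\tilde\rho_r(a^*)\geq\frac12\sqrt{\ln K/(rK)}$ \emph{deterministically}. This is false. $\DLCB_r(a^*)=\max\lrc{0,\LCB_r(a^*)-\min_{a'}\UCB_r(a')}$ is strictly positive whenever the confidence intervals fail (an event of probability up to $1/r^{\alpha-1}$), in which case $\xi_r(a^*)$ is finite and can be small, so the upper tail of $\tilde L_{s-1}(a^*)$ (equivalently, the magnitude and variance contributions of $\tilde\ell_s^{a^*}$) cannot be controlled deterministically. The paper repairs exactly this point by treating $\lrc{\DLCB_s(a^*)>0}$ as a low-probability event (probability at most $1/s^{\alpha-1}$, summed over $s\in[t\Delta(a)^2/\ln t,\,t]$ and absorbed into the $(\ln t/(t\Delta(a)^2))^{\alpha-2}$ error term), and your own machinery for summing failure probabilities accommodates the same fix, so the gap is repairable but must be closed. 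Two smaller inaccuracies: the failure probabilities are not ``dominated by $\tmin(a)$'' --- with $\alpha=3$ the terms $\ln s/(s\Delta(a)^2)$ sum to order $(\ln t)^2/\Delta(a)^2$, i.e.\ they contribute at the leading order (still within the claimed bound); and the constant $\beta=256$ is dictated primarily by the requirement $\beta\geq 64(\alpha+1)$ in Proposition~\ref{thm:DLCB} (so that $\DLCB_t(a)\geq\frac12\Delta(a)$), with the Bernstein deviation $\sqrt{2\nu\ln(1/\delta)}+c\ln(1/\delta)/3\leq\frac12 t\Delta(a)$ (with $\delta=1/t$) then holding comfortably, rather than by a ``deviation exponent beating $3\ln s$'' consideration. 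Finally, the Bernstein step you flag as the crux is indeed where the work lies, and your proposal leaves it at the level of a plan; the paper's Section~\ref{sec:thmEXP3++} carries out precisely the variance/magnitude bounds ($\nu_t\leq 2t^2\Delta(a)^2/(\beta\ln t)$, $c_t\leq 1.25\,t\Delta(a)^2/(\beta\ln t)$ with high probability) that your sketch anticipates.
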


A proof of this theorem is provided in Section~\ref{sec:thmEXP3++}. We note that the regret guarantee of \EXPPP in the adversarial regime scales with $\sqrt t$. Therefore, the ``logarithmic'' regret guarantee in the stochastic regime becomes interesting when $\frac{(\ln t)^2}{\Delta(a)} < \sqrt{t}$ or $t \geq \tilde \Omega\lr{\frac{1}{\Delta(a)^2}}$ (where the tilde notation hides logarithmic factors). The second term in the regret bound in Theorem~\ref{thm:EXP3++} comes from the initial period of the game, where reliable estimate of the gaps cannot be achieved. The value of this term is only slightly suboptimal.

Theorem~\ref{thm:EXP3++} improves the regret bound of \citet[Theorem 3]{SS14} by a multiplicative factor of $\ln t$ and eliminates an exponentially large additive constant of order $\Delta e^{1/\Delta^2}$. We note that asymptotically the regret bound in Theorem~\ref{thm:EXP3++} matches the oracle bound in \citet[Theorem 2]{SS14}, where knowledge of the gaps $\Delta(a)$ is assumed. 

\section{Proof of Proposition~\ref{thm:DLCB}}
\label{sec:thmDLCB}

The proof is based on four steps. In the first step we show that $\DLCB_t(a) \leq \Delta(a)$ with high probability. In the second step we derive a high-probability lower bound on the exploration parameters $\varepsilon_t(a)$. In the third step we derive a high-probability lower bound on the number of times $N_t(a)$ each arm is played. Finally, in the last step we show that $\DLCB_t(a) \geq \frac{1}{2} \Delta(a)$ with high probability.

\paragraph{Step 1: An upper bound for $\DLCB_t(a)$.} The following property of upper and lower confidence bounds follows by standard arguments, as in \citet{ACBF02}. (The proof is standard and provided in the appendix for completeness.)
\begin{lemma}
\label{cl:1}
For any $a$ and $t \geq K$:
\begin{align*}
\P{\UCB_t(a) \leq \mu(a)} &\leq \frac{1}{Kt^{\alpha-1}},\\
\P{\LCB_t(a) \geq \mu(a)} &\leq \frac{1}{Kt^{\alpha-1}}.\\
\end{align*}
\end{lemma}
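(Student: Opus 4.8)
The plan is to prove Lemma~\ref{cl:1} by a standard union-bound-over-the-count argument, exactly in the style of the UCB1 analysis of \citet{ACBF02}. Fix an arm $a$ and a round $t \geq K$. The quantity $\UCB_t(a)$ depends on the data only through $\hat L_{t-1}(a)$ and $N_{t-1}(a)$, and $N_{t-1}(a)$ is a random integer in $\{1,\dots,t-1\}$ (it is at least $1$ because every arm is played once in the initialization). The key observation is that the event $\{\UCB_t(a) \leq \mu(a)\}$ requires, in particular, that the empirical mean $\hat L_{t-1}(a)/N_{t-1}(a)$ plus the confidence width $\sqrt{\alpha \ln(tK^{1/\alpha})/(2N_{t-1}(a))}$ be below $\mu(a)$; since $\mu(a) \leq 1$ the truncation at $1$ is harmless for this direction. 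So I would write
\[
\P{\UCB_t(a) \leq \mu(a)} \leq \P{\exists\, n \in \{1,\dots,t-1\}:\ \hat\mu_{a,n} + \sqrt{\frac{\alpha \ln(tK^{1/\alpha})}{2n}} \leq \mu(a)},
\]
where $\hat\mu_{a,n}$ denotes the empirical average of the first $n$ i.i.d.\ samples of arm $a$ (this substitution is legitimate in the oblivious i.i.d.\ model, where the sequence of losses for arm $a$ can be taken as a fixed i.i.d.\ sequence drawn in advance and $N_{t-1}(a)$ indexes how many of them have been revealed).

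Next I would apply the union bound over $n$ and then Hoeffding's inequality to each term: for each fixed $n$,
\[
\P{\hat\mu_{a,n} - \mu(a) \leq -\sqrt{\frac{\alpha \ln(tK^{1/\alpha})}{2n}}} \leq \exp\!\lr{-2n \cdot \frac{\alpha \ln(tK^{1/\alpha})}{2n}} = \exp\!\lr{-\alpha \ln(tK^{1/\alpha})} = \frac{1}{t^{\alpha} K}.
\]
Summing over the at most $t$ possible values of $n$ (or over $n=1,\dots,t-1$) gives the bound $t \cdot \frac{1}{t^\alpha K} = \frac{1}{K t^{\alpha-1}}$, which is exactly the claimed inequality. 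The lower-confidence-bound statement is completely symmetric: $\{\LCB_t(a) \geq \mu(a)\}$ forces $\hat\mu_{a,n} - \sqrt{\alpha \ln(tK^{1/\alpha})/(2n)} \geq \mu(a) \geq 0$ for the realized $n$, the truncation at $0$ is harmless, and the same Hoeffding-plus-union-bound computation yields $\frac{1}{K t^{\alpha-1}}$.

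The only mild subtlety — and the one place worth a sentence of care rather than a routine calculation — is justifying the reduction from the random, data-dependent count $N_{t-1}(a)$ to a union over deterministic $n$. The clean way is to introduce the fixed i.i.d.\ loss table $\{\ell_s^a\}$ guaranteed by the oblivious model, define $\hat\mu_{a,n}$ from it, and note that on the event of interest the realized count $n := N_{t-1}(a)$ satisfies $1 \leq n \leq t-1$ and the deviation inequality holds at that $n$; hence the event is contained in the union over all admissible $n$, to which the union bound applies directly without any independence issues between $N_{t-1}(a)$ and the samples. I expect no real obstacle here — the lemma is deliberately ``standard,'' and the proof is a textbook Hoeffding + union bound; the content of the section is entirely in the three subsequent steps built on top of it.
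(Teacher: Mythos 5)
Your proof is correct and follows essentially the same route as the paper's: reduce the random count $N_{t-1}(a)$ to a union over deterministic sample counts via a fixed i.i.d.\ loss sequence, apply Hoeffding's inequality at each count to get $\frac{1}{Kt^\alpha}$ per term, and sum over the at most $t$ values to obtain $\frac{1}{Kt^{\alpha-1}}$. The remark about the harmlessness of the truncations at $0$ and $1$ is a fine (implicit in the paper) touch and nothing more is needed.
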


\begin{corollary}
For any $a$ and $t \geq K$:
\[
\P{\DLCB_t(a) \geq \Delta(a)} \leq \frac{1}{t^{\alpha - 1}}.
\]
\label{cor:DLCB}
\end{corollary}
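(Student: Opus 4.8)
The plan is to reduce the event $\{\DLCB_t(a)\ge\Delta(a)\}$ to a union of the confidence-bound failure events already controlled by Lemma~\ref{cl:1}, and then apply a union bound. First I would fix a suboptimal arm $a$, so that $\Delta(a)=\mu(a)-\mu(a^*)>0$ (for a best arm $\DLCB_t(a)$ enters the later analysis only as a lower estimate, so that case carries no content). On the event $\{\DLCB_t(a)\ge\Delta(a)\}$ we have $\DLCB_t(a)>0$, hence by Line~\ref{ln:DLCB} of Algorithm~\ref{algo:IWUCB} the maximum there is attained by its second argument, i.e. $\DLCB_t(a)=\LCB_t(a)-\min_{a'}\UCB_t(a')$, and therefore
\[
\LCB_t(a)-\mu(a)\;\ge\;\min_{a'}\UCB_t(a')-\mu(a^*).
\]
So at least one of the following must occur: either $\LCB_t(a)\ge\mu(a)$ (the left side is nonnegative), or $\min_{a'}\UCB_t(a')\le\mu(a^*)$; in the latter case, taking $b\in\arg\min_{a'}\UCB_t(a')$ and using $\mu(a^*)\le\mu(b)$, we obtain $\UCB_t(b)=\min_{a'}\UCB_t(a')\le\mu(a^*)\le\mu(b)$, i.e. $\UCB_t(b)\le\mu(b)$.

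The one point that needs care is the counting: a naive union over the LCB failure of $a$ together with the UCB failures of all $K$ arms yields $\frac{K+1}{Kt^{\alpha-1}}$, which is slightly too large. To trim it I would use $\LCB_t(a)\le \hat L_{t-1}(a)/N_{t-1}(a)\le \UCB_t(a)$, so that on $\{\DLCB_t(a)>0\}$ we have $\min_{a'}\UCB_t(a')<\LCB_t(a)\le\UCB_t(a)$; hence $a$ does not attain $\min_{a'}\UCB_t(a')$ and the minimizer $b$ above can always be taken with $b\ne a$. Combining the two cases,
\[
\{\DLCB_t(a)\ge\Delta(a)\}\;\subseteq\;\{\LCB_t(a)\ge\mu(a)\}\;\cup\;\bigcup_{a'\ne a}\{\UCB_t(a')\le\mu(a')\},
\]
which is a union of exactly $K$ events.

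Finally I would invoke Lemma~\ref{cl:1}: each of these $K$ events has probability at most $\frac{1}{Kt^{\alpha-1}}$, so a union bound gives $\P{\DLCB_t(a)\ge\Delta(a)}\le K\cdot\frac{1}{Kt^{\alpha-1}}=\frac{1}{t^{\alpha-1}}$, as claimed. The argument is essentially a rewriting of the definitions plus Lemma~\ref{cl:1}; the only genuinely substantive step — and the one I expect to be the ``obstacle'', in the sense that it is what is easy to overlook — is the observation that arm $a$ cannot itself attain $\min_{a'}\UCB_t(a')$ once $\DLCB_t(a)>0$, since this is what makes the constant come out to exactly $1$ rather than $1+1/K$.
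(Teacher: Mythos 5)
Your proof is correct and takes essentially the same route as the paper: the paper's one-line proof is exactly the union bound over the $K$ events $\{\LCB_t(a)\ge\mu(a)\}$ and $\{\UCB_t(a')\le\mu(a')\}$ for $a'\ne a$, with the underlying set inclusion -- including your key point that the $\UCB$-minimizer can be taken distinct from $a$ (equivalently, that a minimizer $b=a$ would force $\DLCB_t(a)=\max\{0,\LCB_t(a)-\UCB_t(a)\}=0<\Delta(a)$), which is what makes the constant come out as $1$ rather than $1+1/K$ -- left implicit. Your restriction to suboptimal arms is also a sensible reading, since for $\Delta(a)=0$ the truncated estimate satisfies $\DLCB_t(a)\ge 0$ trivially, so the stated bound is only meaningful in the strict/untruncated form in which the paper later invokes it for $a^*$.
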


\begin{proof}
\[
\P{\DLCB_t(a) \geq \Delta(a)} \leq \P{\LCB_t(a) \geq \mu(a)} + \sum_{a'\neq a} \P{\UCB_t(a') \leq \mu(a')} \leq \frac{1}{t^{\alpha - 1}}.
\]
\end{proof}


\paragraph{Step 2: A lower bound for $\varepsilon_t(a)$.} We have $\tilde \rho_t(a) \geq \varepsilon_t(a) = \min\lrc{\frac{1}{2K}, \frac{1}{2}\sqrt{\frac{\ln K}{tK}}, \xi_t(a)} \geq \min\lrc{\frac{1}{2K}, \frac{1}{2}\sqrt{\frac{\ln K}{tK}}, \frac{\beta \ln t}{t \DLCB_t(a)^2}}$, where the last inequality is by the choice of $\xi_t(a)$. Note that $\DLCB_t(a)$ is a random variable. We derive a high-probability lower bound on the exploration probabilities. 
\begin{definition}
We define the following events:
\begin{align*}
\event(a, t) &= \lrc{\forall s \in \lrc{1,\dots,t}: \varepsilon_s(a) \geq \frac{\beta \ln t}{t \Delta(a)^2}}\\
\event(a^*, a, t) &= \lrc{\forall s \in \lrc{1,\dots,t}: \varepsilon_s(a^*) \geq \frac{\beta \ln t}{t \Delta(a)^2}}.
\end{align*}
\end{definition}

By using Corollary~\ref{cor:DLCB} we get control over the probability of $\event(a,t)$ and $\event(a^*,a,t)$.
\begin{lemma}
For $t \geq \tmin(a)$ (where $\tmin(a)$ is defined in Proposition~\ref{thm:DLCB}) and $\alpha \geq 3$
\begin{align*}
\P{\overline{\event(a,t)}} &\leq \lr{\frac{\ln t}{t \Delta(a)^2}}^{\alpha-2},\\
\P{\overline{\event(a^*,a,t)}} &\leq \lr{\frac{\ln t}{t \Delta(a)^2}}^{\alpha-2}.
\end{align*}
\end{lemma}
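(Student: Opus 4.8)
The plan is to show that, once $t\ge\tmin(a)$, the event $\event(a,t)$ can fail only if the empirical gap estimate $\DLCB_s(a)$ overshoots the true gap $\Delta(a)$ at some round $s\le t$, and that such an overshoot is moreover confined to a short terminal window of rounds; a union bound over that window together with Corollary~\ref{cor:DLCB} then gives the bound, and the same scheme (with one minor patch) handles $\event(a^*,a,t)$. Concretely, for $t\ge\tmin(a)$ one has $\frac{\beta\ln t}{t\Delta(a)^2}\le\frac12\sqrt{\frac{\ln K}{tK}}\le\frac1{2K}$ (the first inequality is the defining property of $\tmin(a)$ recorded in Proposition~\ref{thm:DLCB}, extended from ``the first such time'' to all later times by monotonicity of $t\mapsto(\ln t)/\sqrt t$; the second uses $\tmin(a)\ge K\ln K$, which follows from its definition), and since $s\mapsto\frac12\sqrt{\frac{\ln K}{sK}}$ is nonincreasing, $\frac{\beta\ln t}{t\Delta(a)^2}\le\min\lrc{\frac1{2K},\ \frac12\sqrt{\frac{\ln K}{sK}}}$ for every $s\le t$. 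Writing $c:=\frac{\beta\ln t}{t\Delta(a)^2}$, this means that among the three terms of the minimum defining $\varepsilon_s(a)$ only $\xi_s(a)$ can fall below $c$; hence for $s\le t$ we have $\varepsilon_s(a)<c$ iff $\xi_s(a)=\frac{\beta\ln s}{s\,\DLCB_s(a)^2}<c$, i.e.\ iff $\DLCB_s(a)^2>\frac{(\ln s)/s}{(\ln t)/t}\Delta(a)^2$; and since $x\mapsto(\ln x)/x$ is decreasing on $[3,\infty)$, the displayed ratio is $\ge1$ for $3\le s\le t$, so this forces $\DLCB_s(a)>\Delta(a)$.

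The next step is to localize in time. Since $\DLCB_s(a)\le1$ always, the inequality $\DLCB_s(a)^2>\frac{(\ln s)/s}{(\ln t)/t}\Delta(a)^2$ is vacuous unless $\frac{\ln s}{s}<\frac{\ln t}{t\Delta(a)^2}$, which — by monotonicity of $(\ln x)/x$ once more — forces $s>s_1$, where $s_1$ is defined by $\frac{\ln s_1}{s_1}=\frac{\ln t}{t\Delta(a)^2}$. A short computation from $t\ge\tmin(a)$ gives $s_1\ge\frac{t\Delta(a)^2}{\ln t}\ge K+1$; in fact $s_1$ exceeds $\frac{t\Delta(a)^2}{\ln t}$ by a factor of order $\ln t$, and this is precisely the slack that will absorb the absolute constants. (Note $s_1\ge K+1$, so all rounds at issue are ones on which $\varepsilon_s(a)$ and $\DLCB_s(a)$ are actually defined in Algorithm~\ref{algo:IWUCB}.) Putting this together, $\overline{\event(a,t)}\subseteq\bigcup_{s_1<s\le t}\lrc{\DLCB_s(a)\ge\Delta(a)}$, so by the union bound and Corollary~\ref{cor:DLCB},
\[
\P{\overline{\event(a,t)}}\ \le\ \sum_{s_1<s\le t} s^{-(\alpha-1)}\ \le\ s_1^{-(\alpha-1)}+\frac{s_1^{-(\alpha-2)}}{\alpha-2}\ =\ O(s_1^{-(\alpha-2)})\ \le\ \lr{\frac{\ln t}{t\Delta(a)^2}}^{\alpha-2},
\]
the last inequality using the logarithmic slack in $s_1$ and $\alpha\ge3$.

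For $\event(a^*,a,t)$ the argument is identical after replacing $a$ by $a^*$ inside $\varepsilon_s$ and $\DLCB_s$ while keeping the \emph{suboptimal} gap $\Delta(a)$ throughout; the single new point is that the final union bound now requires $\P{\DLCB_s(a^*)\ge\Delta(a)}=O(s^{-(\alpha-1)})$, which does \emph{not} follow from Corollary~\ref{cor:DLCB} applied to $a^*$ (whose gap is $0$). It does follow from the confidence bounds directly: $\lrc{\DLCB_s(a^*)\ge\Delta(a)}\subseteq\lrc{\LCB_s(a^*)\ge\mu(a^*)}\cup\bigcup_{a'}\lrc{\UCB_s(a')\le\mu(a')}$, and Lemma~\ref{cl:1} bounds the probability of the right-hand side by $O(s^{-(\alpha-1)})$. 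I expect the only real fuss — more bookkeeping than conceptual difficulty — to be (i) verifying the threshold estimate $s_1\gtrsim t\Delta(a)^2/\ln t$ and checking that enough slack remains to land exactly on $\lr{\frac{\ln t}{t\Delta(a)^2}}^{\alpha-2}$ rather than on a constant multiple of it, and (ii) remembering that Corollary~\ref{cor:DLCB} must be slightly strengthened, as just indicated, before it can be used for the optimal arm.
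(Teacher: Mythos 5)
Your proof is correct and takes essentially the same route as the paper's: for $t \geq \tmin(a)$ only the $\xi$-term of the minimum can drop below $\frac{\beta\ln t}{t\Delta(a)^2}$, such a drop forces $\DLCB_s(a)\geq\Delta(a)$ and can only occur for $s$ beyond roughly $t\Delta(a)^2/\ln t$, and a union bound over that terminal window with the overestimation probability yields the claim (the paper simply sums $\sum_{s\geq t\Delta(a)^2/\ln t}s^{-(\alpha-1)}$ via Lemma~\ref{lem:sum}, picking up a factor $\frac12$, where you instead use the extra slack in your threshold $s_1$). Your patch for the second inequality --- bounding $\P{\DLCB_s(a^*)\geq\Delta(a)}$ directly from Lemma~\ref{cl:1}, since Corollary~\ref{cor:DLCB} applied to $a^*$ (whose gap is zero) is vacuous --- is a legitimate and welcome tightening of the paper's one-line remark that the case ``follows the same lines using $\Delta(a^*)=0\leq\Delta(a)$.''
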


\begin{proof}
We start with the proof of the first inequality. Note that by definition $\DLCB_s(a) \leq 1$ and thus $\frac{\beta \ln s}{s \DLCB_s(a)} \geq \frac{\beta \ln s}{s}$. For $s \leq t \Delta(a)^2 / \ln t$ we have $\frac{\beta \ln s}{s} \geq \frac{\beta (\ln s) \ln t}{t \Delta(a)^2} \geq \frac{\beta \ln t}{t \Delta(a)^2}$ and thus $\varepsilon_s(a) \geq \frac{\beta \ln t}{t \Delta(a)^2}$. Therefore, we have:
\begin{align*}
\P{\overline{\event(a,t)}} &=
\P{\exists s \in \lrs{\frac{t \Delta(a)^2}{\ln t}, t}: \varepsilon_s(a) \leq \frac{\beta \ln t}{t \Delta(a)^2}}\\
&= \P{\exists s \in \lrs{\frac{t \Delta(a)^2}{\ln t}, t}: \DLCB_t(a) \geq \Delta(a)} \leq \sum_{s=\frac{t \Delta(a)^2}{\ln t}}^t \frac{1}{s^{\alpha-1}} \leq \frac{1}{2} \lr{\frac{\ln t}{t \Delta(a)^2}}^{\alpha-2}.
\end{align*}
(The summation is bounded by using Lemma~\ref{lem:sum} in the appendix.) The bound for $\P{\overline{\event(a^*,a,t)}}$ follows the same lines using the fact that $\Delta(a^*) = 0 \leq \Delta(a)$.
\end{proof}

\paragraph{Step 3: A lower bound for $N_t(a)$.} We use the following concentration inequality.

\begin{theorem}
\label{thm:SB}
Let $X_1,\dots,X_n$ be Bernoulli random variables adapted to filtration $\F_1,\dots,\F_n$ (in particular, $X_i$ may depend on $X_1,\dots,X_{i-1}$). Let $\event_\gamma$ be the event $\event_\gamma = \lrc{\forall i: \E[X_i \middle | \F_{i-1}] \geq \gamma}$. Then
\[
\P{\lr{\sum_{i=1}^n X_i \leq \frac{1}{2} n\gamma} \land \event_\gamma} \leq e^{-n \gamma / 8}.
\]
\end{theorem}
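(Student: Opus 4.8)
The plan is to prove the concentration inequality in Theorem~\ref{thm:SB} via an exponential (Chernoff-style) supermartingale argument, carefully handling the fact that $\event_\gamma$ is a "good event" that we intersect with rather than condition on throughout. The quantity $\1[\sum_{i=1}^n X_i \leq \frac12 n\gamma]$ is not directly a nice martingale object, so I would first introduce a lower bound on the conditional means in a way that remains valid \emph{only on the good event}. Concretely, define $Y_i = X_i \1[\E[X_i \mid \F_{i-1}] \geq \gamma]$ (the indicator is $\F_{i-1}$-measurable). On $\event_\gamma$ we have $Y_i = X_i$ for all $i$, so $\sum_i Y_i = \sum_i X_i$ there, and in general $\E[Y_i \mid \F_{i-1}] \geq \gamma$ on the set where the indicator is $1$ and $=0$ otherwise — i.e.\ $\E[Y_i\mid\F_{i-1}] = p_i\1[p_i\geq\gamma]$ where $p_i := \E[X_i\mid\F_{i-1}]$, which is either $0$ or at least $\gamma$. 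This replacement lets us work with a process whose conditional means are \emph{uniformly} bounded below by $\gamma$ whenever they are nonzero, without any conditioning on a future-dependent event.

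Next I would set up the exponential supermartingale. For $\lambda > 0$ consider $Z_m = \exp\lr{-\lambda \sum_{i=1}^m Y_i - \sum_{i=1}^m \psi_i(\lambda)}$ where $\psi_i(\lambda) = \ln \E[e^{-\lambda Y_i}\mid\F_{i-1}]$. By construction $\E[Z_m \mid \F_{m-1}] = Z_{m-1}$, so $Z_m$ is a martingale with $\E[Z_n] = \E[Z_0] = 1$, hence by Markov's inequality $\P{\sum_i Y_i \leq \frac12 n\gamma \land \sum_i \psi_i(\lambda) \geq -c} \leq e^{-\lambda \cdot \frac12 n\gamma + c}$ for any threshold $c$ we can justify. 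The Bernoulli log-MGF satisfies $\psi_i(\lambda) = \ln(1 - p_i(1 - e^{-\lambda})) \leq -p_i(1-e^{-\lambda})$. When $p_i = 0$ this is $0$; when $p_i \geq \gamma$ it is $\leq -\gamma(1-e^{-\lambda})$. The subtlety is that on $\event_\gamma$ we do not control how many $i$ have $p_i \geq \gamma$ versus $p_i = 0$ — but on $\event_\gamma$ \emph{all} of them satisfy $p_i\geq\gamma$, so there $\sum_i \psi_i(\lambda) \leq -n\gamma(1-e^{-\lambda})$. Thus on the event $\sum_i X_i \leq \frac12 n\gamma$ intersected with $\event_\gamma$, we have $\sum_i Y_i = \sum_i X_i \leq \frac12 n\gamma$ and $-\sum_i \psi_i(\lambda) \geq n\gamma(1-e^{-\lambda})$, so $Z_n = \exp(-\lambda\sum Y_i - \sum\psi_i(\lambda)) \geq \exp(-\frac{\lambda}{2} n\gamma + n\gamma(1-e^{-\lambda}))$. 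Taking expectations, $\P{(\sum X_i \leq \frac12 n\gamma)\land\event_\gamma} \leq \exp\lr{\frac{\lambda}{2}n\gamma - n\gamma(1-e^{-\lambda})}$.

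Finally I would optimize over $\lambda$. The exponent is $n\gamma\lr{\frac{\lambda}{2} - 1 + e^{-\lambda}}$; choosing $\lambda = \ln 2$ gives $\frac{\ln 2}{2} - 1 + \frac12 = \frac{\ln 2}{2} - \frac12 \approx 0.3466 - 0.5 = -0.1534$, which is less than $-1/8 = -0.125$, yielding the claimed bound $e^{-n\gamma/8}$. (One could squeeze a slightly better constant, e.g.\ $\lambda$ near $0.77$ gives roughly $-0.157$, but $1/8$ is a clean round constant and all that is needed downstream.) The main obstacle — and the point most needing care — is the bookkeeping with the good event: making precise that we never condition on the future-dependent event $\event_\gamma$ inside the martingale construction, and that the step-function lower bound on conditional means ($p_i \in \{0\}\cup[\gamma,1]$ off the event, all $\geq\gamma$ on it) is exactly what is needed to both build a legitimate supermartingale and to lower-bound $Z_n$ on the target event. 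Everything else is a routine Chernoff optimization.
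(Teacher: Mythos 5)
Your proof is correct, and it differs from the paper's in its two key technical choices, so it is worth comparing. The paper handles the good event by keeping the indicator $Z=\1[\event_\gamma]$ inside the expectation and peeling off one factor at a time, using at each step that on $\lrc{Z=1}$ the conditional mean $\E[X_i \mid \F_{i-1}]$ is at least $\gamma$; you instead pass to $Y_i = X_i\,\1[\E[X_i \mid \F_{i-1}]\geq\gamma]$, which agrees with $X_i$ on $\event_\gamma$ and whose multiplier is $\F_{i-1}$-measurable, so you obtain an honest exponential martingale with $\E[Z_n]=1$ and no conditioning on a trajectory-dependent event --- arguably a cleaner bookkeeping of exactly the point that the paper's peeling step treats somewhat tersely. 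Second, the paper bounds the moment generating function via $e^{-\lambda x}\leq 1-\lambda x+\lambda^2x^2/2$ together with $X^2=X$ for Bernoulli variables, which yields the exponent $\lambda-\lambda^2/2$, forces the restriction $\lambda\in(0,2]$, and after optimizing a general deviation inequality specializes to $\delta=e^{-n\gamma/8}$, i.e.\ $\lambda=2$; you use the exact Bernoulli log-MGF bound $\ln\lr{1-q(1-e^{-\lambda})}\leq -q(1-e^{-\lambda})$ and choose $\lambda=\ln 2$, which gives the slightly better exponent $\lr{\tfrac{1}{2}-\tfrac{\ln 2}{2}}n\gamma\approx 0.153\,n\gamma$ and hence the stated $e^{-n\gamma/8}$ with room to spare. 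One small slip: in your intermediate Markov display the inequality on $\sum_i\psi_i(\lambda)$ and the sign in the resulting bound are reversed (the relevant event is $\sum_i\psi_i(\lambda)\leq -c$, which yields the bound $e^{\lambda n\gamma/2-c}$), but your subsequent concrete computation uses the correct direction, so the argument as carried out is unaffected.
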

The result is based on quite standard techniques and its proof is provided in the appendix.

By Theorem~\ref{thm:SB}, for $t \geq t_{\texttt{min}}(a)$ we have:
\begin{align*}
\P{N_t(a) \leq \frac{\beta \ln t}{2\Delta(a)^2}} &\leq \P{\event(a,t) \land \lr{N_t(a) \leq \frac{\beta \ln t}{2\Delta(a)^2}}} + \P{\overline{\event(a,t)}}\\
&\leq e^{-\frac{\beta \ln t}{8\Delta(a)^2}} + \frac{1}{2} \lr{\frac{\ln t}{t \Delta(a)^2}}^{\alpha-2}\\
&\leq \lr{\frac{1}{t}}^{\frac{\beta}{8\Delta(a)^2}} + \frac{1}{2} \lr{\frac{\ln t}{t \Delta(a)^2}}^{\alpha-2}\\
&\leq \lr{\frac{1}{t}}^{\frac{\beta}{8}} + \frac{1}{2} \lr{\frac{\ln t}{t \Delta(a)^2}}^{\alpha-2}.
\end{align*}

In the same way we have $\P{N_t(a^*) \leq \frac{\beta \ln t}{2\Delta(a)^2}} \leq \lr{\frac{1}{t}}^{\frac{\beta}{8}} + \frac{1}{2} \lr{\frac{\ln t}{t \Delta(a)^2}}^{\alpha-2}$.

\paragraph{Step 4: A lower bound for $\DLCB_t(a)$.} 
By Lemma~\ref{cl:1} upper and lower confidence bounds satisfy $\P{\lr{\UCB_t(a^*) \leq \mu(a^*)} \lor \lr{\LCB_t(a) \geq \mu(a)}} \leq \frac{2}{Kt^{\alpha-1}}$. Assuming that $\UCB_t(a^*) \geq \mu(a^*)$ and $\LCB_t(a) \leq \mu(a)$, we have:
\begin{align*}
\DLCB_t(a) &\geq \LCB_t(a) - \min_{a'} \UCB_t(a)\\
&\geq \LCB_t(a) - \UCB_t(a^*)\\
&= \frac{\hat L_{t-1}(a)}{N_{t-1}(a)} - \sqrt{\frac{\alpha \ln \lr{t K^{1/\alpha}}}{2 N_t(a)}} - \frac{\hat L_{t-1}(a^*)}{N_{t-1}(a^*)} - \sqrt{\frac{\alpha \ln \lr{t K^{1/\alpha}}}{2 N_t(a^*)}}\\
&= \frac{\hat L_{t-1}(a)}{N_{t-1}(a)} + \sqrt{\frac{\alpha \ln \lr{t K^{1/\alpha}}}{2 N_t(a)}} - 2\sqrt{\frac{\alpha \ln \lr{t K^{1/\alpha}}}{2 N_t(a)}}\\
&\qquad - \lr{\frac{\hat L_{t-1}(a^*)}{N_{t-1}(a^*)} - \sqrt{\frac{\alpha \ln \lr{t K^{1/\alpha}}}{2 N_t(a^*)}}} - 2 \sqrt{\frac{\alpha \ln \lr{t K^{1/\alpha}}}{2 N_t(a^*)}}\\
&= \UCB_t(a) - \LCB_t(a^*) - 2\sqrt{\frac{\alpha \ln \lr{t K^{1/\alpha}}}{2 N_t(a)}} - 2 \sqrt{\frac{\alpha \ln \lr{t K^{1/\alpha}}}{2 N_t(a^*)}}\\
&\geq \Delta(a) - 2 \sqrt{\frac{\alpha \ln \lr{t K^{1/\alpha}}}{2 N_t(a)}} - 2 \sqrt{\frac{\alpha \ln \lr{t K^{1/\alpha}}}{2 N_t(a^*)}}.
\end{align*}

By Step~3, for $t \geq \tmin(a)$ we have $t \geq K$ and $\P{\lr{N_t(a) \leq \frac{\beta \ln t}{2 \Delta(a)^2}} \lor \lr{N_t(a^*) \leq \frac{\beta \ln t}{2 \Delta(a)^2}}} \leq \lr{\frac{\ln t}{t\Delta(a)^2}}^{\alpha-2} + 2 \lr{\frac{1}{t}}^{\frac{\beta}{8}}$. Assuming that $N_t(a) > \frac{\beta \ln t}{2 \Delta(a)^2}$ and $N_t(a^*) > \frac{\beta \ln t}{2 \Delta(a)^2}$ we have:
\begin{align*}
\DLCB_t(a) &\geq \Delta(a) - 2 \sqrt{\frac{\alpha \ln \lr{t K^{1/\alpha}}}{2 N_t(a)}} - 2 \sqrt{\frac{\alpha \ln \lr{t K^{1/\alpha}}}{2 N_t(a^*)}}\\
&\geq \Delta(a) - 4 \sqrt{\frac{ 2 \Delta(a)^2 \alpha \ln \lr{t K^{1/\alpha}}}{2 \beta \ln t}}\\
&\geq \Delta(a) - 4 \sqrt{\frac{\Delta(a)^2 (\alpha + 1) \ln t}{\beta \ln t}}\\
&= \Delta(a)\lr{1 - 4 \sqrt{\frac{\alpha + 1}{\beta}}}.
\end{align*}
Taking everything together we obtain that for $t \geq \tmin(a)$ and $\beta \geq 64(\alpha+1)$ we have
\[
\P{\DLCB_t(a) \leq \frac{1}{2} \Delta(a)} \leq \lr{\frac{\ln t}{t \Delta(a)^2}}^{\alpha-2} + \frac{2}{Kt^{\alpha-1}} + 2\lr{\frac{1}{t}}^{\frac{\beta}{8}}.
\]

\section{Proof of Theorem~\ref{thm:EXP3++}}
\label{sec:thmEXP3++}

In order to obtain a regret bound, for each suboptimal arm $a$ we have to bound $\sum_{s=1}^t \E[\rho_s(a)]$ and $\sum_{s=1}^t \E[\varepsilon_s(a)]$. For the former we have:
\[
\rho_t(a) = \frac{e^{-\eta_t \tilde L_{t-1}(a)}}{\sum_{a'} e^{-\eta_t \tilde L_{t-1}(a')}} 
= \frac{e^{-\eta_t \lr{\tilde L_{t-1}(a) - \tilde L_{t-1}(a^*)}}}{\sum_{a'} e^{-\eta_t\lr{\tilde L_{t-1}(a')- \tilde L_{t-1}(a^*)}}}
\leq e^{-\eta_t \lr{\tilde L_{t-1}(a) - \tilde L_{t-1}(a^*)}}
= e^{-\eta_t \tilde \Delta_t(a)},
\]
where $\tilde \Delta_t(a) = \tilde L_{t-1}(a) - \tilde L_{t-1}(a^*)$ is the gap between cumulative importance-weighted estimates of the losses. Unfortunately, the bound on unweighted gap estimates $\DLCB_t(a)$ provided by Proposition~\ref{thm:DLCB} does not directly lead to a bound on the weighted gap estimates $\tilde \Delta_t(a)$ and, therefore, does not provide a bound on $\rho_t(a)$. We use the following form on Bernstein's inequality for martingales to achieve this goal. Theorem~\ref{thm:Bernstein} is a minor variation of a classical Bernstein's inequality for martingales \citep{Fre75}, where we relax the assumption on boundedness of the martingale difference sequence. The theorem follows by a simple adaptation of the proof by \citet[Theorem 3.15]{McD98}, which is sketched in the appendix.

\begin{theorem}[Bernstein's inequality for martingales] Let $X_1,\dots,X_n$ be a martingale difference sequence with respect to filtration $\F_1,\dots,\F_n$, where each $X_j$ is bounded from above, and let $S_i = \sum_{j=1}^i X_j$ be the associated martingale. Let $\nu_n = \sum_{j=1}^n \E[\lr{X_j}^2 \middle | \F_{j-1}]$ and $\displaystyle c_n = \max_{1\leq j\leq n} \lrc{X_j}$. Then for any $\delta > 0$:
\[
\P{\lr{S_n \geq \sqrt{2 \nu \ln \frac{1}{\delta}} + \frac{c \ln \frac{1}{\delta}}{3}} \wedge \lr{\nu_n \leq \nu} \wedge \lr{c_n \leq c}} \leq \delta.
\]
\label{thm:Bernstein}
\end{theorem}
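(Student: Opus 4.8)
The plan is to adapt the classical exponential-supermartingale argument for Bernstein's inequality, following McDiarmid, but being careful about the one-sided boundedness rather than two-sided. First I would recall that for a real random variable $X$ with $X\le c$, $\E[X]=0$, and $\E[X^2]\le v$, the standard bound $\E[e^{\lambda X}] \le \exp\lr{v \phi(\lambda c)/c^2}$ holds for $\lambda>0$, where $\phi(u)=e^u-1-u$; the key point is that the function $u\mapsto \phi(u)/u^2$ is increasing, so $\E[e^{\lambda X}-1-\lambda X] = \E\lrs{(\lambda X)^2 \frac{\phi(\lambda X)}{(\lambda X)^2}} \le \E\lrs{(\lambda X)^2 \frac{\phi(\lambda c)}{(\lambda c)^2}} \le \frac{v}{c^2}\phi(\lambda c)$, and since $1+x\le e^x$ this upgrades to the multiplicative bound on $\E[e^{\lambda X}]$. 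Only the upper boundedness of $X$ is used here, which matches the hypothesis.

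Next I would set up the stopped/localized supermartingale. Define the predictable process $V_i = \sum_{j\le i}\E[X_j^2\mid\F_{j-1}]$ and $C_i=\max_{j\le i} X_j$, and let $\tau$ be the stopping time at which either $V_i$ first exceeds $\nu$ or $C_i$ first exceeds $c$. On the event $\lrc{\nu_n\le\nu}\wedge\lrc{c_n\le c}$ we have $\tau\ge n$, so it suffices to bound $\P{S_{n\wedge\tau}\ge \sqrt{2\nu\ln(1/\delta)}+\tfrac{c}{3}\ln(1/\delta)}$. For a fixed $\lambda\in(0,3/c)$, consider $M_i = \exp\lr{\lambda S_{i\wedge\tau} - \frac{\lambda^2 V_{i\wedge\tau}}{2(1-\lambda c/3)}}$, or more cleanly use the bound $\phi(\lambda c)\le \frac{(\lambda c)^2/2}{1-\lambda c/3}$ for $\lambda c<3$ together with the per-step estimate above to show $M_i$ is a supermartingale with $\E[M_0]=1$ (here the stopping ensures every increment's conditional variance and upper bound are controlled by the constants $\nu,c$). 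Markov's inequality then gives $\P{S_{n\wedge\tau}\ge x}\le \E[M_{n\wedge\tau}]\,\exp\lr{-\lambda x + \frac{\lambda^2\nu}{2(1-\lambda c/3)}} \le \exp\lr{-\lambda x + \frac{\lambda^2\nu}{2(1-\lambda c/3)}}$.

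Finally I would optimize over $\lambda$. Choosing the standard value $\lambda = \frac{x}{\nu + cx/3}$ (which lies in $(0,3/c)$) yields $\P{S_{n\wedge\tau}\ge x}\le \exp\lr{-\frac{x^2}{2(\nu+cx/3)}}$, the familiar Bernstein bound; then substituting $x = \sqrt{2\nu\ln(1/\delta)}+\tfrac{c}{3}\ln(1/\delta)$ and checking that $\frac{x^2}{2(\nu+cx/3)}\ge \ln(1/\delta)$ — a routine manipulation using $\sqrt{ab}\le\tfrac12(a+b)$ and the fact that both summands in $x$ contribute — closes the proof. The main obstacle, such as it is, is not conceptual but bookkeeping: making sure the one-sided boundedness is genuinely all that is needed in the per-step moment-generating-function estimate (so that the hypothesis "each $X_j$ bounded from above" is not secretly being strengthened), and handling the localization via stopping cleanly so that the events $\lrc{\nu_n\le\nu}$ and $\lrc{c_n\le c}$ in the statement are correctly converted into a statement about the stopped martingale; neither step is difficult but both must be stated precisely, which is presumably why the paper defers to the sketch in the appendix.
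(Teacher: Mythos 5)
Your overall architecture (Bennett-type per-step MGF bound using only an upper bound on the increment, exponential supermartingale, Markov, optimize $\lambda$) is reasonable, but the step you rely on to convert the events $\lrc{\nu_n\le\nu}$ and $\lrc{c_n\le c}$ into a statement about a stopped process is exactly where the argument fails, and it is not mere bookkeeping. The constraint $c_n\le c$ is not predictable: on the event $\lrc{\tau\ge i}\in\F_{i-1}$ you only know that the increments \emph{before} step $i$ were at most $c$, yet $X_i$ itself is included in $S_{i\wedge\tau}$, and the hypothesis of the theorem only says that $X_i$ is bounded above by \emph{some} constant, possibly far larger than $c$. Consequently the per-step estimate $\E[e^{\lambda X_i}\mid\F_{i-1}]\le\exp\lr{\tfrac{\lambda^2\E[X_i^2\mid\F_{i-1}]}{2(1-\lambda c/3)}}$ is simply not available at that step, and your $M_i$ need not be a supermartingale (let $X_i$ take a huge value $B\gg c$ with tiny probability to see the failure for moderate $\lambda$). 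A similar, though repairable, slip occurs with the variance: if the process stops because $V_i$ overshoots $\nu$, then $V_{n\wedge\tau}>\nu$ and your inequality $\P{S_{n\wedge\tau}\ge x}\le e^{-\lambda x+\lambda^2\nu/(2(1-\lambda c/3))}$ does not follow; since $V_i$ is $\F_{i-1}$-measurable this can be fixed by stopping predictably, but the max-increment constraint cannot be handled by stopping at all. One must either truncate, using $\1[X_j\le c]\,e^{\lambda X_j}\le e^{\lambda\min\lrc{X_j,c}}$ together with the facts that $\min\lrc{X_j,c}$ has nonpositive conditional mean and (for $c\ge0$) no larger conditional second moment, or do what the paper does: carry the indicator $Z'=\1[\lr{\nu_n\le\nu}\wedge\lr{c_n\le c}]$ inside the expectation and peel it off step by step, following McDiarmid's Theorem 3.15. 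That indicator device is precisely the content of the paper's modification, so it cannot be waved away.

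The closing calibration is also false as written. With $L=\ln\frac1\delta$ and $x=\sqrt{2\nu L}+\frac{c}{3}L$ one has $x^2-2L\lr{\nu+\frac{cx}{3}}=-\frac{c^2L^2}{9}<0$, so $\frac{x^2}{2(\nu+cx/3)}<L$ strictly; in the regime $cL\gg\nu$ the weakened bound $\exp\lr{-\frac{x^2}{2(\nu+cx/3)}}$ only gives about $e^{-L/2}=\sqrt\delta$, not $\delta$. The theorem is still recovered from the sub-gamma MGF bound, but you must optimize $\lambda$ exactly rather than pass through that quadratic-over-linear form; for instance $\lambda=\lr{\frac{c}{3}+\sqrt{\nu/(2L)}}^{-1}$ gives $\lambda x-\frac{\lambda^2\nu}{2(1-\lambda c/3)}=L$ exactly. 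So both the localization step and the final check need genuine repairs before your route matches the statement.
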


We apply this theorem to martingale difference sequence $X_s = \Delta(a) - \lr{\tilde \ell_s^a - \tilde \ell_s^{a^*}}$ with respect to filtration $\F_1,\F_2,\dots$ in order to bound the martingale $t\Delta(a) - \tilde \Delta_t(a) = \sum_{s=1}^t X_s$. We start by bounding the magnitude of $X_s$-es and the sum of their conditional variances and then use Bernstein's inequality to bound $\tilde \Delta_t(a)$. The bound on $\tilde \Delta_t(a)$ is then used to bound $\rho_t(a)$. At the end we treat the second term of the regret bound, $\sum_{s=1}^t \E[\varepsilon_s(a)]$.

\paragraph{Control of the magnitude of $\displaystyle \max_{1\leq s\leq t} \lrc{X_s}$.} We start by bounding the magnitude of the martingale difference sequence $X_s = \Delta(a) - \lr{\tilde \ell_s^a - \tilde \ell_s^{a^*}}$. We have:
\begin{align*}
\Delta(a) - \lr{\tilde \ell_s^a - \tilde \ell_s^{a^*}} &\leq 1 + \tilde \ell_s^{a^*}\\
&\leq 1 + \frac{1}{\varepsilon_s(a^*)}\\
&= 1 + \max \lrc{2K, 2 \sqrt{\frac{sK}{\ln K}}, \frac{s \DLCB_s(a^*)^2}{\beta \ln s}}\\
&\leq 1.25 \max \lrc{2K, 2 \sqrt{\frac{sK}{\ln K}}, \frac{s \DLCB_s(a^*)^2}{\beta \ln s}}.
\end{align*}
Note that $\DLCB_s(a^*) \leq 1$ and thus for $t \geq \tmin(a)$ and $s \leq t \Delta(a)^2 / \ln t$ we have $\frac{1}{\varepsilon_s(a^*)} \leq \frac{t\Delta(a)^2}{\beta \ln t}$ (we have $a^*$ on the left-hand side and $a$ on the right-hand side). Furthermore, by Proposition~\ref{thm:DLCB} we have that $\P{\DLCB_s(a^*) \geq \Delta(a^*)} \leq \frac{1}{s^{\alpha - 1}}$, where $\Delta(a^*) = 0$. Thus,
\[
\P{\exists s \in \lrs{\frac{t \Delta(a)^2}{\ln t}, t}: \DLCB_s(a^*) \geq 0}
\leq \sum_{s = t \Delta(a)^2 / \ln t}^t \frac{1}{s^{\alpha -1}}
\leq \frac{1}{2}\lr{\frac{\ln t}{t\Delta(a)^2}}^{\alpha - 2}.
\]
Let $c_t = \displaystyle\max_{1\leq s\leq t}\lrc{X_s}$. We obtain that for $t \geq \tmin(a)$
\[
\P{c_t \geq \frac{1.25 t\Delta(a)^2}{\beta\ln t}} \leq \frac{1}{2}\lr{\frac{\ln t}{t\Delta(a)^2}}.
\]

\paragraph{Control of the sum of conditional variances $\sum_{s=1}^t \E[\lr{X_s}^2\middle|\F_{s-1}]$.} We start by looking at individual terms in the sum. We have:
\begin{align*}
\E[\lr{\Delta(a) - \lr{\tilde \ell_s^a - \tilde \ell_s^{a^*}}}^2 \middle |\F_{s-1}]
&\leq \E[\lr{\tilde \ell_s^a - \tilde \ell_s^{a^*}}^2\middle | \F_{s-1}]\\
&= \E[\lr{\tilde \ell_s^a}^2\middle |\F_{s-1}] + \E[\lr{\tilde \ell_s^{a^*}}^2\middle |\F_{s-1}],
\end{align*}
where the equality is due to the fact that by the way importance-weighted samples are defined we have $\tilde \ell_s^a \tilde \ell_s^{a^*} = 0$ and thus $\lr{\tilde \ell_s^a - \tilde \ell_s^{a^*}}^2 = \lr{\tilde \ell_s^a}^2 + \lr{\tilde \ell_s^{a^*}}^2$. 
Further,
\[
\E[\lr{\tilde \ell_s^a}^2\middle |\F_{s-1}] = \tilde \rho_s(a) \lr{\frac{\ell_s^a}{\tilde \rho_s(a)}}^2 \leq \frac{1}{\tilde \rho_s(a)} \leq \frac{1}{\varepsilon_s(a)}
= \max \lrc{2K, 2 \sqrt{\frac{sK}{\ln K}}, \frac{s \DLCB_s(a)^2}{\beta \ln s}}.
\]
Note that $\DLCB_s(a) \leq 1$ and thus for $t \geq \tmin(a)$ and $s \leq t \Delta(a)^2 / \ln t$ we have $\frac{1}{\varepsilon_s(a)} \leq \frac{t\Delta(a)^2}{\beta \ln t}$. Furthermore,
\[
\P{\exists s \in \lrs{\frac{t \Delta(a)^2}{\ln t}, t} : \DLCB_s(a) \geq \Delta(a)}
\leq \sum_{s = \frac{t \Delta(a)^2}{\ln t}}^t \frac{1}{s^{\alpha-1}}
\leq \frac{1}{2}\lr{\frac{\ln t}{t \Delta(a)^2}}^{\alpha - 2}.
\]

We define $\nu_t = \sum_{s=1}^t \E[\lr{\Delta(a) - \lr{\tilde \ell_s^a - \tilde \ell_s^{a^*}}}^2 \middle | \F_{s-1}]$ and we have that for $t \geq \tmin(a)$
\begin{align*}
&\P{\nu_t \geq \frac{2t^2 \Delta(a)^2}{\beta \ln t}}\\ 
&\qquad\leq \P{\exists s \in \lrs{\frac{t \Delta(a)^2}{\ln t}, t} : \DLCB_s(a) \geq \Delta(a)}+ \P{\exists s \in \lrs{\frac{t \Delta(a)^2}{\ln t}, t} : \DLCB_s(a^*) \geq 0}\\
&\qquad\leq \lr{\frac{\ln t}{t \Delta(a)^2}}^{\alpha - 2}.
\end{align*}
Note that the random event involving $\DLCB_s(a^*)$ is the same as the one we have considered in Step~1. Thus, in total $\P{c_t \geq \frac{1.25t\Delta(a)^2}{\beta \ln t}} + \P{\nu_t \geq \frac{2t^2\Delta(a)^2}{\beta\ln t}} \leq \lr{\frac{\ln t}{t \Delta(a)^2}}^{\alpha - 2}$.

\paragraph{Control of $\tilde \Delta_t(a)$.} We have that
\begin{align*}
&\P{\tilde \Delta_t(a) \leq \frac{1}{2} t \Delta(a)} \\
&\qquad\qquad= \P{t \Delta(a) - \tilde \Delta_t(a) \geq \frac{1}{2} t \Delta(a)} \\
&\qquad\qquad\leq \P{c_t \geq \frac{1.25 t \Delta(a)^2}{\beta \ln t}} + \P{\nu_t \geq \frac{2t^2 \Delta(a)^2}{\beta \ln t}}\\
&\qquad\qquad\quad + \P{\lr{t \Delta(a) - \tilde \Delta_t(a) \geq \frac{1}{2} t \Delta(a)} \wedge \lr{\nu_t \leq \frac{2t^2 \Delta(a)^2}{\beta \ln t}} \wedge \lr{c_t \leq \frac{1.25 t \Delta(a)^2}{\beta \ln t}}}.
\end{align*}
Taking $\nu = \frac{2t^2 \Delta(a)^2}{\beta \ln t}$, $c = \frac{1.25 t \Delta(a)^2}{\beta \ln t}$, and $\delta = \frac{1}{t}$, for $\beta \geq 256$ we have
\[
\sqrt{2 \nu \ln \frac{1}{\delta}} + \frac{c \ln \frac{1}{\delta}}{3} = \sqrt{\frac{4 t^2 \Delta(a)^2 \ln t}{\beta \ln t}} + \frac{1.25 t \Delta(a)^2 \ln t}{3\beta \ln t}
\leq t \Delta(a) \lr{\frac{2}{\sqrt{\beta}} + \frac{1.25}{3 \beta}} \leq \frac{1}{2} t \Delta(a)
\]
and by Bernstein's inequality the last term is bounded by $\frac{1}{t}$.
Overall, for $t \geq \tmin(a)$:
\[
\P{\tilde \Delta_t(a) \leq \frac{1}{2} t \Delta(a)} \leq \lr{\frac{\ln t}{t \Delta(a)^2}}^{\alpha - 2} + \frac{1}{t}.
\]

\paragraph{Control of $\sum_{s=1}^t \E[\rho_s(a)]$.} From here we have for $\eta_t \geq \frac{1}{2}\sqrt{\frac{\ln K}{tK}}$ and $\alpha \geq 3$:
\begin{align*}
\sum_{s=1}^t \E[\rho_s(a)] &\leq \sum_{s=1}^t \E[e^{-\eta_s  \tilde \Delta_s(a)}]\\
&\leq \tmin(a) + \sum_{s=\tmin(a)}^t \lr{e^{-\frac{1}{2} \eta_s s \Delta(a)} + \lr{\frac{\ln s}{s \Delta(a)^2}}^{\alpha - 2} + \frac{1}{s}}\\
&\leq \tmin(a) + \frac{\lr{\lr{\ln t}^2 + \ln t}}{\Delta(a)^2} + \ln t + 1 + \sum_{s=\tmin(a)}^t e^{-\frac{1}{4}\sqrt{\frac{s \ln K}{K}} \Delta(a)}\\
&\leq \frac{\lr{\lr{\ln t}^2 + \ln t}}{\Delta(a)^2} + \ln t + 1 + \frac{16 K}{\Delta(a)^2 \ln K} + 1 + \tmin(a).
\end{align*}

\paragraph{Control of $\sum_{s=1}^t \E[\varepsilon_s(a)]$.} By Proposition~\ref{thm:DLCB}, for $t \geq \tmin(a)$ we have that $\P{\DLCB_t(a) \leq \frac{1}{2}\Delta(a)} \leq \lr{\frac{\ln t}{t\Delta(a)^2}}^{\alpha-2} + \frac{2}{Kt^{\alpha-1}}+2\lr{\frac{1}{t}}^{\frac{\beta}{8}}$. Thus, for $\alpha = 3$ and $\beta = 256$ we have
\begin{align*}
\sum_{s=1}^t \E[\varepsilon_s(a)] &= \sum_{s=1}^t \E[\min\lrc{\frac{1}{2K}, \frac{1}{2} \sqrt{\frac{\ln s}{sK}}, \frac{\beta \ln s}{s\DLCB_s(a)^2}}]\\
&\leq \sum_{s=1}^t \E[\frac{\beta \ln s}{s\DLCB_s(a)^2}]\\
&\leq \tmin(a) + \frac{4\beta\lr{\lr{\ln t}^2 + \ln t}}{\Delta(a)^2} + \sum_{s=\tmin(a)}^t \lr{\lr{\frac{\ln s}{s\Delta(a)^2}}^{\alpha-2} + \frac{2}{Ks^{\alpha-1}}+\lr{\frac{1}{s}}^{\frac{\beta}{8}}}\\
&\leq \tmin(a) + \frac{4\beta\lr{\lr{\ln t}^2 + \ln t}}{\Delta(a)^2} + \frac{\lr{\ln t}^2 + \ln t}{\Delta(a)^2} + \frac{2}{K}\lr{\ln t+1} + \frac{2\pi^2}{3}.
\end{align*}
By combining the bounds on $\sum_{s=1}^t \E[\rho_s(a)]$ and $\sum_{s=1}^t \E[\varepsilon_s(a)]$ we obtain that $\E[N_t(a)] = O\lr{\sum_{a:\Delta(a)>0} \frac{\lr{\ln t}^2}{\Delta(a)^2}} + \tilde O\lr{\frac{K}{\Delta(a)^4}}$, which leads to the statement of the theorem.

\section{Discussion}
\label{sec:discussion}

We have proposed a new algorithm for gap estimation in stochastic environments that can be combined with other randomized algorithms in a modular fashion. The algorithm provides a gap estimate $\DLCB_t(a)$ that satisfies $\frac{1}{2}\Delta(a) \leq \DLCB_t(a) \leq \Delta(a)$ with high probability. 
We have shown that the algorithm can be combined with the \EXPPP algorithm, leading to $O\lr{\sqrt{Kt\ln K}}$ regret in the adversarial regime and $O\lr{\sum_{a:\Delta(a)>0} \frac{\lr{\ln t}^2}{\Delta(a)}}$ regret in the stochastic regime, where the latter is an improvement by a multiplicative factor of $\ln t$ over \citet{SS14}.

Our work leads to a number of interesting directions for future research. First, there is a question whether the dependence of the regret guarantee on time horizon in the stochastic regime can be reduced down to $\ln t$. We note that \citet{AC16} have a lower bound on achievable regret guarantees in the stochastic regime when simultaneously certain expected regret guarantees against an adaptive adversary or high-probability regret guarantees against an oblivious adversary are required. However, it is still unknown whether $\ln t$ regret in the stochastic regime can be achieved simultaneously with $\sqrt t$ expected regret against an oblivious adversary. While it does not seem possible to achieve it with the \EXPPP algorithm, some modifications of the playing rule, such as the one used in BOA \citep{Win17}, could potentially do better.

A second question is whether improved regret guarantees can be achieved in the moderately contaminated stochastic regime and adversarial regime with a gap. We believe that it might not be possible with gap estimation strategies based on unweighted rewards and that in order to achieve that we should improve gap estimation based on importance-weighted rewards. The analysis technique suggested in our paper could potentially be useful for that.

There are also a number of more technical questions. For example, can we achieve high-probability regret guarantees by turning to modifications of the \EXP algorithm, such as EXP3-IX \citep{Neu15}? Or could we replace $\frac{1}{\Delta(a)}$ factors with more refined measures of complexity, such as those in kl-UCB-type algorithms \citep{CGM+13}?

\acks{We would like to thank Tor Lattimore and anonymous COLT reviewers for valuable suggestions for improvement of the manuscript. G\'abor Lugosi was supported by the Spanish Ministry of Economy and Competitiveness, Grant MTM2015-67304-P and FEDER.}

\bibliography{bibliography}

\appendix

%
%
%

\section{Proof of Lemma~\ref{cl:1}}

The proof is based on Hoeffding's inequality \cite{Hoe63}.
\begin{theorem}[Hoeffding's inequality]
Let $X_1,\dots,X_n$ be i.i.d.\ random variables, such that $0\leq X_i\leq 1$ and $\E[X_i] = \mu$ for all $i$. Then
\begin{align*}
\P{\frac{1}{n} \sum_{i=1}^n X_i - \mu \geq \sqrt{\frac{\ln \frac{1}{\delta}}{2n}}} &\leq \delta,\\
\P{\mu - \frac{1}{n} \sum_{i=1}^n X_i \geq \sqrt{\frac{\ln \frac{1}{\delta}}{2n}}} &\leq \delta.
\end{align*}
\end{theorem}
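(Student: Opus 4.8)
The plan is to prove the inequality by the standard Chernoff (exponential moment) method, reducing the two tails to a single argument. First I would observe that the second (lower-tail) inequality follows from the first by applying it to the variables $1 - X_1, \dots, 1 - X_n$, which are again i.i.d.\ in $[0,1]$ with mean $1 - \mu$; since $\mu - \frac{1}{n}\sum_i X_i = \frac{1}{n}\sum_i (1-X_i) - (1-\mu)$, the lower tail for the $X_i$ is literally an upper tail for the $1-X_i$. Hence it suffices to prove the first inequality, and I will write $\varepsilon = \sqrt{\ln(1/\delta)/(2n)}$ for the stated deviation.

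For the upper tail, I would fix $\lambda > 0$ and apply Markov's inequality to the nonnegative random variable $e^{\lambda \sum_i (X_i - \mu)}$, then use independence to factor the moment generating function:
\[
\P{\sum_{i=1}^n (X_i - \mu) \geq n\varepsilon} \leq e^{-\lambda n \varepsilon}\, \E[e^{\lambda \sum_i (X_i - \mu)}] = e^{-\lambda n \varepsilon} \prod_{i=1}^n \E[e^{\lambda (X_i - \mu)}].
\]
The crux is to control each factor. I would invoke Hoeffding's lemma: if $Y$ is a random variable with $\E[Y] = 0$ and $a \leq Y \leq b$ almost surely, then $\E[e^{\lambda Y}] \leq e^{\lambda^2 (b-a)^2 / 8}$. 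Applying it with $Y = X_i - \mu \in [-\mu, 1-\mu]$, for which $b - a = 1$, gives $\E[e^{\lambda(X_i - \mu)}] \leq e^{\lambda^2/8}$, and therefore $\P{\frac{1}{n}\sum_i X_i - \mu \geq \varepsilon} \leq e^{-\lambda n \varepsilon + n\lambda^2/8}$.

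It then remains to optimize the free parameter. The exponent $-\lambda n \varepsilon + n\lambda^2/8$ is minimized at $\lambda = 4\varepsilon$, yielding the bound $e^{-2n\varepsilon^2}$. Substituting $\varepsilon = \sqrt{\ln(1/\delta)/(2n)}$ makes $2n\varepsilon^2 = \ln(1/\delta)$, so the right-hand side is exactly $\delta$, which closes the argument.

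I expect the only genuinely substantive ingredient to be Hoeffding's lemma itself, which I regard as the main obstacle (everything else is routine). Its proof is short but requires a little care: by convexity of $x \mapsto e^{\lambda x}$ one bounds $e^{\lambda Y}$ on $[a,b]$ by the chord through the endpoints, which reduces $\E[e^{\lambda Y}]$ to an explicit function of $\lambda, a, b$. Writing its logarithm as $\psi(\lambda)$, one checks $\psi(0) = \psi'(0) = 0$ and that $\psi''(\lambda) \leq (b-a)^2/4$ uniformly (this is a bound on the variance of a Bernoulli-type random variable). A second-order Taylor expansion then gives $\psi(\lambda) \leq \lambda^2 (b-a)^2/8$, as required.
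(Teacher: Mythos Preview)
Your argument is correct and is the standard Chernoff-method proof of Hoeffding's inequality. Note, however, that the paper does not actually prove this statement: it simply quotes Hoeffding's inequality with a citation to \cite{Hoe63} and then applies it in the proof of Lemma~\ref{cl:1}, so there is nothing in the paper to compare against beyond the bare citation.
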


\begin{proof}\textbf{of Lemma~\ref{cl:1}} ~
The proof directly follows the analysis of confidence bounds in \citet{ACBF02}. Note that $N_{t-1}(a)$ is a random variable dependent on $\hat L_{t-1}(a)$ and we cannot apply Hoeffding's inequality directly. Let $X_1,\dots,X_t$ be i.i.d.\ random variables with the same distribution as $\ell_1^a$ and let $\hat M_s = \sum_{r=1}^s X_r$. Then
\begin{align*}
\P{\UCB_t(a) \leq \mu(a)} &= \P{\frac{\hat L_{t-1}(a)}{N_{t-1}(a)} + \sqrt{\frac{\alpha \ln\lr{t K^{1/\alpha}}}{2 N_{t-1}(a)}} \leq \mu(a)}\\
&\leq \P{\exists s \in \lrc{1,\dots,t-1}: \frac{\hat M_s}{s} + \sqrt{\frac{\alpha \ln\lr{t K^{1/\alpha}}}{2 s}} \leq \mu(a)}\\
&\leq \sum_{s=1}^{t-1} \P{\mu(a) - \frac{\hat M_s}{s} \geq \sqrt{\frac{\ln\lr{t^\alpha K}}{2 s}}}\\
&\leq \sum_{s=1}^{t-1} \frac{1}{Kt^\alpha}\\
&\leq \frac{1}{Kt^{\alpha-1}}.
\end{align*}
The proof of the second inequality in the lemma is analogous.
\end{proof}

\section{Partial Sum of Reciprocals of Powers of Natural Numbers}

\begin{lemma}
\label{lem:sum}
For $\alpha \geq 2$ and $m \geq 1$:
\[
\sum_{k=m}^n \frac{1}{k^\alpha} \leq \frac{1}{2 m^{\alpha - 1}}.
\]
\end{lemma}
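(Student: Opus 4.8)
The plan is the standard device of bounding a partial sum of a decreasing function by an integral, followed by one elementary estimate to pin down the constant $\tfrac12$. Since $f(x)=x^{-\alpha}$ is positive and strictly decreasing on $(0,\infty)$, the crude bound $f(k)\le\int_{k-1}^{k}f(x)\,dx$ (valid for $k\ge 2$) already gives the right order of magnitude; to land exactly on the factor $\tfrac12$ it is cleaner to exploit the convexity of $f$, which gives the sharper midpoint estimate $f(k)\le\int_{k-1/2}^{k+1/2}f(x)\,dx$ for every $k\ge 1$. Summing the midpoint estimate over $k=m,\dots,n$ the consecutive integrals telescope, so
\[
\sum_{k=m}^{n}\frac{1}{k^{\alpha}}\;\le\;\int_{m-1/2}^{\,n+1/2}\frac{dx}{x^{\alpha}}\;\le\;\int_{m-1/2}^{\infty}\frac{dx}{x^{\alpha}}\;=\;\frac{1}{(\alpha-1)\,\bigl(m-\tfrac12\bigr)^{\alpha-1}},
\]
where passing to $+\infty$ is legitimate because $\alpha>1$, and the improper integral converges since $m-\tfrac12\ge\tfrac12>0$ for $m\ge 1$.

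It then remains to verify the purely numerical inequality $\dfrac{1}{(\alpha-1)(m-\frac12)^{\alpha-1}}\le\dfrac{1}{2\,m^{\alpha-1}}$, equivalently $\dfrac{2}{\alpha-1}\bigl(\tfrac{m}{m-1/2}\bigr)^{\alpha-1}\le 1$. Here the hypothesis $\alpha\ge 2$ enters: the prefactor $\tfrac{2}{\alpha-1}$ is decreasing in $\alpha$, while the factor $\bigl(m/(m-\tfrac12)\bigr)^{\alpha-1}$ can be controlled via $1+x\le e^{x}$, so the claim reduces to checking a one-variable estimate by elementary calculus.

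I do not expect a genuine obstacle; the only thing requiring attention is keeping the constant \emph{exactly} $\tfrac12$. The naive comparison $f(k)\le\int_{k-1}^{k}f$ produces $\tfrac{1}{(\alpha-1)(m-1)^{\alpha-1}}$, phrased in terms of $m-1$ rather than $m$, so one must either use the tighter midpoint comparison (lower endpoint $m-\tfrac12$) as above, or else peel off the first summand $k=m$ and bound only the tail $\sum_{k\ge m+1}k^{-\alpha}\le\int_{m}^{\infty}x^{-\alpha}\,dx$, reassembling the pieces afterwards. Either route is routine bookkeeping once the integral comparison is set up.
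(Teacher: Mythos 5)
Your integral-comparison setup is fine: convexity of $x^{-\alpha}$ does give $k^{-\alpha}\le\int_{k-1/2}^{k+1/2}x^{-\alpha}\,dx$ and hence $\sum_{k=m}^n k^{-\alpha}\le\frac{1}{(\alpha-1)(m-1/2)^{\alpha-1}}$. But the ``purely numerical inequality'' you defer to the end, $\frac{2}{\alpha-1}\bigl(\frac{m}{m-1/2}\bigr)^{\alpha-1}\le 1$, is false, and no one-variable calculus will rescue it: at $\alpha=2$ it reads $\frac{2m}{m-1/2}\le 1$, and at $m=1$ it reads $\frac{2^{\alpha}}{\alpha-1}\le 1$, both absurd for every admissible parameter. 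The obstacle is not an artifact of your route: the lemma as stated is false. For $m=n=1$ the single term is $1>\tfrac12$, and for $\alpha=2$ and any $m$, $\sum_{k=m}^{n}k^{-2}\ge\sum_{k=m}^{n}\frac{1}{k(k+1)}=\frac1m-\frac{1}{n+1}$, which exceeds $\frac{1}{2m}$ already for $n\ge 2m$. So your tail-peeling fallback ($\frac{1}{m^{\alpha}}+\frac{1}{(\alpha-1)m^{\alpha-1}}$) cannot reach the constant $\tfrac12$ either; what is provable is a bound of order $m^{-(\alpha-1)}$ with a larger absolute constant, e.g.\ your $\frac{1}{(\alpha-1)(m-1/2)^{\alpha-1}}$.

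For comparison, the paper proves the lemma by factoring $\frac{1}{k^{\alpha}}=\frac{1}{k^{2}}\cdot\frac{1}{k^{\alpha-2}}$, invoking $2k^{2}\ge k(k+1)$, and telescoping $\sum_{k=m}^{n}\frac{1}{k(k+1)}\le\frac1m$. Carried out correctly this gives $\frac{1}{k^{2}}\le\frac{2}{k(k+1)}$ and hence $\sum_{k=m}^{n}k^{-\alpha}\le\frac{2}{m^{\alpha-1}}$; the factor $\tfrac12$ in the paper's first displayed inequality would require $2k(k+1)\le k^{2}$, so the stated constant is off by a factor of $4$. This only perturbs the absolute constants where the lemma is invoked (e.g.\ the $\tfrac12\bigl(\frac{\ln t}{t\Delta(a)^{2}}\bigr)^{\alpha-2}$ terms in Step~2 of the proof of Proposition~\ref{thm:DLCB} and the analogous sums in Section~\ref{sec:thmEXP3++}), not any rate in Proposition~\ref{thm:DLCB} or Theorem~\ref{thm:EXP3++}. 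So the honest conclusion of your approach --- the bound with $(m-1/2)^{-(\alpha-1)}/(\alpha-1)$, or equivalently the corrected telescoping bound $2/m^{\alpha-1}$ --- is what should be stated and used; your mistake was only in asserting, without checking, that the final numerical step ``cannot pose a genuine obstacle,'' when in fact it fails and reveals that the target constant is unattainable.
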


\begin{proof}
We have $2 k^2 \geq k(k+1)$ and $\sum_{k=m}^n \frac{1}{k(k+1)} \leq \frac{1}{m}$ (which is obtained by writing $\frac{1}{k(k+1)} = \frac{1}{k} - \frac{1}{k+1}$). Thus:
\[
\sum_{k=m}^n \frac{1}{k^\alpha} \leq \frac{1}{2} \sum_{k=m}^n \frac{1}{k(k+1)k^{\alpha - 2}}\leq \frac{1}{2 m^{\alpha - 2}} \sum_{k=m}^n \frac{1}{k(k+1)}  \leq \frac{1}{2 m^{\alpha - 1}}.
\]
\end{proof}

\section{Proof of Theorem~\ref{thm:SB}}

\begin{proof}
We start with a bound on a moment generating function of a single Bernoulli random variable $X$. For any $\lambda > 0$ we have
\[
\E[e^{-\lambda X}] \leq \E[1 - \lambda X + \frac{\lambda^2 X^2}{2}] 
= 1 - \lr{\lambda - \frac{\lambda^2}{2}} \E[X]
\leq e^{-\lr{\lambda - \frac{\lambda^2}{2}} \E[X]}.
\]
And, as a consequence,
\[
\E[e^{\lr{\lambda - \frac{\lambda^2}{2}} \E[X] - \lambda X}] \leq 1.
\]
For $\lambda \in (0,2]$ we have $\lambda - \frac{\lambda^2}{2} \geq 0$. Therefore, if $\E[X] \geq \gamma$ then $\E[e^{\lr{\lambda - \frac{\lambda^2}{2}} \gamma - \lambda X}] \leq \E[e^{\lr{\lambda - \frac{\lambda^2}{2}} \E[X] - \lambda X}] \leq 1$.

Let $Z = \1[\event_\gamma]$ denote the indicator random variable of the event $\event_\gamma$. Then for $\lambda \in (0,2]$ we have
\begin{align*}
\E[Z e^{\lr{\lambda - \frac{\lambda^2}{2}}n\gamma - \lambda \sum_{i=1}^n X_i}] &= \E[Ze^{\lr{\lambda - \frac{\lambda^2}{2}}(n-1)\gamma - \lambda \sum_{i=1}^{n-1} X_i}\E[Ze^{\lr{\lambda - \frac{\lambda^2}{2}} \gamma - \lambda X_n} \middle | \F_{n-1}]]\\
&\leq \E[Ze^{\lr{\lambda - \frac{\lambda^2}{2}}(n-1)\gamma - \lambda \sum_{i=1}^{n-1} X_i}\E[e^{\lr{\lambda - \frac{\lambda^2}{2}} \E[X_n|\F_{n-1}] - \lambda X_n} \middle | \F_{n-1}]]\\
&\leq \E[e^{\lr{\lambda - \frac{\lambda^2}{2}}(n-1)\gamma - \lambda \sum_{i=1}^{n-1} X_i}]\\
&\leq \cdots\\
&\leq 1.
\end{align*}
By combining this result with Markov's inequality we have that for any $\lambda \in (0,2]$
\begin{align*}
\P{\lr{\sum_{i=1}^n X_i \leq n\gamma - \frac{\lambda}{2}n\gamma - \frac{\ln \frac{1}{\delta}}{\lambda}} \land \event_\gamma} &= \P{\lr{\lr{\lambda - \frac{\lambda^2}{2}} n \gamma - \lambda \sum_{i=1}^n X_i \geq \ln \frac{1}{\delta}} \land \event_\gamma} \\
&= \P{Z e^{\lr{\lambda + \frac{\lambda^2}{2}} n \gamma - \lambda\sum_{i=1}^n X_i} \geq \frac{1}{\delta}} \\
&\leq \delta \E[Ze^{\lr{\lambda + \frac{\lambda^2}{2}}n\gamma - \lambda\sum_{i=1}^n X_i}]\\
&\leq \delta.
\end{align*}
By taking $\lambda = \sqrt{\frac{n\gamma}{2 \ln \frac{1}{\delta}}}$ we obtain
\[
\P{\lr{\sum_{i=1}^n X_i \leq n\gamma - \sqrt{2 n\gamma \ln \frac{1}{\delta}}} \land \event_\gamma} \leq \delta.
\]
Finally, taking $\delta = e^{-n\gamma / 8}$ leads to $\lambda = 2$ and completes the proof.
\end{proof}

\section{Proof sketch of Theorem~\ref{thm:Bernstein} (Bernstein's inequality)}

The proof is analogous to the proof of \citet[Theorem 3.15]{McD98}. \citeauthor{McD98} assumes that $X_j$-s are bounded by $c$ and the proof is based on defining an indicator random variable $Z = \1[\nu_n \leq \nu]$ and bounding $\P{\lr{S_n \geq \alpha} \wedge \lr{\nu_n \leq \nu}} = \P{Ze^{\lambda S_n} \geq e^{\lambda \alpha}}$ for $\lambda > 0$. We remove the assumption and define an indicator random variable $Z' = \1[\lr{\nu_n \leq \nu} \wedge \lr{c_n \leq c}]$. Then $\P{\lr{S_n \geq \alpha} \wedge \lr{\nu_n \leq \nu} \wedge \lr{c_n \leq c}} = \P{Z'e^{\lambda S_n} \geq e^{\lambda \alpha}}$ for $\lambda > 0$ and the rest of the proof is identical.

\end{document}